\theoremstyle{plain}
\newtheorem{theorem}{Theorem}[section]
\newtheorem{lemma}[theorem]{Lemma}
\theoremstyle{definition}
\newtheorem{definition}[theorem]{Definition}
\theoremstyle{remark}
\newcommand*\bigcdot{\mathpalette\bigcdot@{.5}}
\newcommand*\bigcdot@[2]{\mathbin{\vcenter{\hbox{\scalebox{#2}{$\m@th#1\bullet$}}}}}
\title{Flat Seeking Bayesian Neural Networks}
\author{%
Van-Anh Nguyen\textsuperscript{\rm 1}
\and
\textbf{Tung-Long Vuong}\textsuperscript{\rm 1,}\textsuperscript{\rm 2}
\and
\textbf{Hoang Phan}\textsuperscript{\rm 2,}\textsuperscript{\rm 3}
\and
\textbf{Thanh-Toan Do}\textsuperscript{\rm 1}
\and
\textbf{Dinh Phung} \textsuperscript{\rm 1,}\textsuperscript{\rm 2} ~~~~~~~~~
\textbf{Trung Le} \textsuperscript{\rm 1}
\\
\textsuperscript{\rm 1}Department of Data Science and AI, Monash University, Australia \\
\textsuperscript{\rm 2}VinAI, Vietnam \\
\textsuperscript{\rm 3}New York University, United States \\
\{\tt\small van-anh.nguyen, tung-long.vuong, toan.do, dinh.phung, trunglm\}@monash.edu \\
{\tt\small hvp2011@nyu.edu} ~
}
\begin{document}
\maketitle



\newcommand{\sidenote}[1]{\marginpar{\small \emph{\color{Medium}#1}}}

\global\long\def\se{\hat{\text{se}}}%

\global\long\def\interior{\text{int}}%

\global\long\def\boundary{\text{bd}}%

\global\long\def\ML{\textsf{ML}}%

\global\long\def\GML{\mathsf{GML}}%

\global\long\def\HMM{\mathsf{HMM}}%

\global\long\def\support{\text{supp}}%

\global\long\def\new{\text{*}}%

\global\long\def\stir{\text{Stirl}}%

\global\long\def\mA{\mathcal{A}}%

\global\long\def\mB{\mathcal{B}}%

\global\long\def\mF{\mathcal{F}}%

\global\long\def\mK{\mathcal{K}}%

\global\long\def\mH{\mathcal{H}}%

\global\long\def\mX{\mathcal{X}}%

\global\long\def\mZ{\mathcal{Z}}%

\global\long\def\mS{\mathcal{S}}%

\global\long\def\Ical{\mathcal{I}}%

\global\long\def\mT{\mathcal{T}}%

\global\long\def\Pcal{\mathcal{P}}%

\global\long\def\dist{d}%

\global\long\def\HX{\entro\left(X\right)}%
 
\global\long\def\entropyX{\HX}%

\global\long\def\HY{\entro\left(Y\right)}%
 
\global\long\def\entropyY{\HY}%

\global\long\def\HXY{\entro\left(X,Y\right)}%
 
\global\long\def\entropyXY{\HXY}%

\global\long\def\mutualXY{\mutual\left(X;Y\right)}%
 
\global\long\def\mutinfoXY{\mutualXY}%

\global\long\def\given{\mid}%

\global\long\def\gv{\given}%

\global\long\def\goto{\rightarrow}%

\global\long\def\asgoto{\stackrel{a.s.}{\longrightarrow}}%

\global\long\def\pgoto{\stackrel{p}{\longrightarrow}}%

\global\long\def\dgoto{\stackrel{d}{\longrightarrow}}%

\global\long\def\lik{\mathcal{L}}%

\global\long\def\logll{\mathit{l}}%

\global\long\def\vectorize#1{\mathbf{#1}}%

\global\long\def\vt#1{\mathbf{#1}}%

\global\long\def\gvt#1{\boldsymbol{#1}}%

\global\long\def\idp{\ \bot\negthickspace\negthickspace\bot\ }%
 
\global\long\def\cdp{\idp}%

\global\long\def\das{}%

\global\long\def\id{\mathbb{I}}%

\global\long\def\idarg#1#2{\id\left\{  #1,#2\right\}  }%

\global\long\def\iid{\stackrel{\text{iid}}{\sim}}%

\global\long\def\bzero{\vt 0}%

\global\long\def\bone{\mathbf{1}}%

\global\long\def\boldm{\boldsymbol{m}}%

\global\long\def\bff{\vt f}%

\global\long\def\bx{\boldsymbol{x}}%

\global\long\def\bd{\boldsymbol{d}}%

\global\long\def\bl{\boldsymbol{l}}%

\global\long\def\bu{\boldsymbol{u}}%

\global\long\def\bo{\boldsymbol{o}}%

\global\long\def\bh{\boldsymbol{h}}%

\global\long\def\bs{\boldsymbol{s}}%

\global\long\def\bz{\boldsymbol{z}}%

\global\long\def\xnew{y}%

\global\long\def\bxnew{\boldsymbol{y}}%

\global\long\def\bX{\boldsymbol{X}}%

\global\long\def\tbx{\tilde{\bx}}%

\global\long\def\by{\boldsymbol{y}}%

\global\long\def\bY{\boldsymbol{Y}}%

\global\long\def\bZ{\boldsymbol{Z}}%

\global\long\def\bU{\boldsymbol{U}}%

\global\long\def\bv{\boldsymbol{v}}%

\global\long\def\bn{\boldsymbol{n}}%

\global\long\def\bV{\boldsymbol{V}}%

\global\long\def\bI{\boldsymbol{I}}%

\global\long\def\bw{\vt w}%

\global\long\def\balpha{\gvt{\alpha}}%

\global\long\def\bbeta{\gvt{\beta}}%

\global\long\def\bmu{\gvt{\mu}}%

\global\long\def\btheta{\boldsymbol{\theta}}%

\global\long\def\blambda{\boldsymbol{\lambda}}%

\global\long\def\bgamma{\boldsymbol{\gamma}}%

\global\long\def\bpsi{\boldsymbol{\psi}}%

\global\long\def\bphi{\boldsymbol{\phi}}%

\global\long\def\bpi{\boldsymbol{\pi}}%

\global\long\def\bomega{\boldsymbol{\omega}}%

\global\long\def\bepsilon{\boldsymbol{\epsilon}}%

\global\long\def\btau{\boldsymbol{\tau}}%

\global\long\def\bxi{\boldsymbol{\xi}}%

\global\long\def\realset{\mathbb{R}}%

\global\long\def\realn{\realset^{n}}%

\global\long\def\integerset{\mathbb{Z}}%

\global\long\def\natset{\integerset}%

\global\long\def\integer{\integerset}%

\global\long\def\natn{\natset^{n}}%

\global\long\def\rational{\mathbb{Q}}%

\global\long\def\rationaln{\rational^{n}}%

\global\long\def\complexset{\mathbb{C}}%

\global\long\def\comp{\complexset}%

\global\long\def\compl#1{#1^{\text{c}}}%

\global\long\def\and{\cap}%

\global\long\def\compn{\comp^{n}}%

\global\long\def\comb#1#2{\left({#1\atop #2}\right) }%

\global\long\def\nchoosek#1#2{\left({#1\atop #2}\right)}%

\global\long\def\param{\vt w}%

\global\long\def\Param{\Theta}%

\global\long\def\meanparam{\gvt{\mu}}%

\global\long\def\Meanparam{\mathcal{M}}%

\global\long\def\meanmap{\mathbf{m}}%

\global\long\def\logpart{A}%

\global\long\def\simplex{\Delta}%

\global\long\def\simplexn{\simplex^{n}}%

\global\long\def\dirproc{\text{DP}}%

\global\long\def\ggproc{\text{GG}}%

\global\long\def\DP{\text{DP}}%

\global\long\def\ndp{\text{nDP}}%

\global\long\def\hdp{\text{HDP}}%

\global\long\def\gempdf{\text{GEM}}%

\global\long\def\rfs{\text{RFS}}%

\global\long\def\bernrfs{\text{BernoulliRFS}}%

\global\long\def\poissrfs{\text{PoissonRFS}}%

\global\long\def\grad{\gradient}%
 
\global\long\def\gradient{\nabla}%

\global\long\def\partdev#1#2{\partialdev{#1}{#2}}%
 
\global\long\def\partialdev#1#2{\frac{\partial#1}{\partial#2}}%

\global\long\def\partddev#1#2{\partialdevdev{#1}{#2}}%
 
\global\long\def\partialdevdev#1#2{\frac{\partial^{2}#1}{\partial#2\partial#2^{\top}}}%

\global\long\def\closure{\text{cl}}%

\global\long\def\cpr#1#2{\Pr\left(#1\ |\ #2\right)}%

\global\long\def\var{\text{Var}}%

\global\long\def\Var#1{\text{Var}\left[#1\right]}%

\global\long\def\cov{\text{Cov}}%

\global\long\def\Cov#1{\cov\left[ #1 \right]}%

\global\long\def\COV#1#2{\underset{#2}{\cov}\left[ #1 \right]}%

\global\long\def\corr{\text{Corr}}%

\global\long\def\sst{\text{T}}%

\global\long\def\SST{\sst}%

\global\long\def\ess{\mathbb{E}}%

\global\long\def\Ess#1{\ess\left[#1\right]}%


\global\long\def\fisher{\mathcal{F}}%

\global\long\def\bfield{\mathcal{B}}%
 
\global\long\def\borel{\mathcal{B}}%

\global\long\def\bernpdf{\text{Bernoulli}}%

\global\long\def\betapdf{\text{Beta}}%

\global\long\def\dirpdf{\text{Dir}}%

\global\long\def\gammapdf{\text{Gamma}}%

\global\long\def\gaussden#1#2{\text{Normal}\left(#1, #2 \right) }%

\global\long\def\gauss{\mathbf{N}}%

\global\long\def\gausspdf#1#2#3{\text{Normal}\left( #1 \lcabra{#2, #3}\right) }%

\global\long\def\multpdf{\text{Mult}}%

\global\long\def\poiss{\text{Pois}}%

\global\long\def\poissonpdf{\text{Poisson}}%

\global\long\def\pgpdf{\text{PG}}%

\global\long\def\wshpdf{\text{Wish}}%

\global\long\def\iwshpdf{\text{InvWish}}%

\global\long\def\nwpdf{\text{NW}}%

\global\long\def\niwpdf{\text{NIW}}%

\global\long\def\studentpdf{\text{Student}}%

\global\long\def\unipdf{\text{Uni}}%

\global\long\def\transp#1{\transpose{#1}}%
 
\global\long\def\transpose#1{#1^{\mathsf{T}}}%

\global\long\def\mgt{\succ}%

\global\long\def\mge{\succeq}%

\global\long\def\idenmat{\mathbf{I}}%

\global\long\def\trace{\mathrm{tr}}%

\global\long\def\argmax#1{\underset{_{#1}}{\text{argmax}} }%

\global\long\def\argmin#1{\underset{_{#1}}{\text{argmin}\ } }%

\global\long\def\diag{\text{diag}}%

\global\long\def\norm{}%

\global\long\def\spn{\text{span}}%

\global\long\def\vtspace{\mathcal{V}}%

\global\long\def\field{\mathcal{F}}%
 
\global\long\def\ffield{\mathcal{F}}%

\global\long\def\inner#1#2{\left\langle #1,#2\right\rangle }%
 
\global\long\def\iprod#1#2{\inner{#1}{#2}}%

\global\long\def\dprod#1#2{#1 \cdot#2}%

\global\long\def\norm#1{\left\Vert #1\right\Vert }%

\global\long\def\entro{\mathbb{H}}%

\global\long\def\entropy{\mathbb{H}}%

\global\long\def\Entro#1{\entro\left[#1\right]}%

\global\long\def\Entropy#1{\Entro{#1}}%

\global\long\def\mutinfo{\mathbb{I}}%

\global\long\def\relH{\mathit{D}}%

\global\long\def\reldiv#1#2{\relH\left(#1||#2\right)}%

\global\long\def\KL{KL}%

\global\long\def\KLdiv#1#2{\KL\left(#1\parallel#2\right)}%
 
\global\long\def\KLdivergence#1#2{\KL\left(#1\ \parallel\ #2\right)}%

\global\long\def\crossH{\mathcal{C}}%
 
\global\long\def\crossentropy{\mathcal{C}}%

\global\long\def\crossHxy#1#2{\crossentropy\left(#1\parallel#2\right)}%

\global\long\def\breg{\text{BD}}%

\global\long\def\lcabra#1{\left|#1\right.}%

\global\long\def\lbra#1{\lcabra{#1}}%

\global\long\def\rcabra#1{\left.#1\right|}%

\global\long\def\rbra#1{\rcabra{#1}}%

\begin{abstract}
Bayesian Neural Networks (BNNs) provide a probabilistic interpretation for deep learning models by imposing a prior distribution over model parameters and inferring a posterior distribution based on observed data. The model sampled from the posterior distribution can be used for providing ensemble predictions and quantifying prediction uncertainty. It is well-known that deep learning models with lower sharpness have better generalization ability. However, existing posterior inferences are not aware of sharpness/flatness in terms of formulation, possibly leading to high sharpness for the models sampled from them. In this paper, we develop theories, the Bayesian setting, and the variational inference approach for the sharpness-aware posterior. Specifically, the models sampled from our sharpness-aware posterior, and the optimal approximate posterior estimating this sharpness-aware posterior, have better flatness, hence possibly possessing higher generalization ability. We conduct experiments by leveraging the sharpness-aware posterior with state-of-the-art Bayesian Neural Networks, showing that the flat-seeking counterparts outperform their baselines in all metrics of interest.

\end{abstract}

\section{Introduction}
\label{sec:intro}
Bayesian Neural Networks (BNNs) provide a way to interpret deep learning models probabilistically. This is done by setting a prior distribution over model parameters and then inferring a posterior distribution over model parameters based on observed data. This allows us to not only make predictions, but also quantify prediction uncertainty, which is useful for many real-world applications. To sample deep learning models from complex and complicated posterior distributions, advanced particle-sampling approaches such as Hamiltonian Monte Carlo (HMC) \cite{Neal_Bayesian}, Stochastic Gradient HMC (SGHMC) \cite{chen2014stochastic}, Stochastic Gradient Langevin dynamics (SGLD) \cite{welling2011bayesian}, and Stein Variational Gradient Descent (SVGD) \cite{liu2016stein} are often used. However, these methods can be computationally expensive, particularly when many models need to be sampled for better ensembles.


To alleviate this computational burden and enable the sampling of multiple deep learning models from posterior distributions, variational inference approaches employ approximate posteriors to estimate the true posterior. These methods utilize approximate posteriors that belong to sufficiently rich families, which are both economical and convenient to sample from. However, the pioneering works in variational inference, such as \cite{graves2011practical, blundell2015weight, kingma2015variational}, assume approximate posteriors to be fully factorized distributions, also known as mean-field variational inference. This approach fails to account for the strong statistical dependencies among random weights of neural networks, limiting its ability to capture the complex structure of the true posterior and estimate the true model uncertainty. To overcome this issue, latter works have attempted to provide posterior approximations with richer expressiveness \cite{Zhang2018DeepML, ritter2018scalable, rossi2020walsh, swiatkowski2020k, ghosh2018structured, ong2018gaussian, tomczak2020efficient, khan2018fast, cuong2023}. These approaches aim to improve the accuracy of the posterior approximation and enable more effective uncertainty quantification.

In the context of standard deep network training, it has been observed that flat minimizers can enhance the generalization capability of models. This is achieved by enabling them to locate wider local minima that are more robust to shifts between train and test sets. Several studies, including \cite{DBLP:conf/iclr/JiangNMKB20, DBLP:conf/nips/PetzkaKASB21, DBLP:conf/uai/DziugaiteR17}, have shown evidence to support this principle.
However, the posteriors used in existing Bayesian neural networks (BNNs) do not account for the sharpness/flatness of the models derived from them in terms of model formulation. As a result, the sampled models can be located in regions of high sharpness and low flatness, leading to poor generalization ability. Moreover, in variational inference methods, using approximate posteriors to estimate these non-sharpness-aware posteriors can result in sampled models from the corresponding optimal approximate posterior lacking awareness of sharpness/flatness, hence causing them to suffer from poor generalization ability.
In this paper, our objective is to propose a sharpness-aware posterior for learning BNNs, which samples models with high flatness for better generalization ability. To achieve this, we devise both a Bayesian setting and a variational inference approach for the proposed posterior. By estimating the optimal approximate posteriors, we can generate flatter models that improve the generalization ability.
Our approach is as follows: In Theorem \ref{thm:Gibbs_posterior}, we show that the standard posterior is the optimal solution to an optimization problem that balances the empirical loss induced by models sampled from an approximate posterior for fitting a training set with a Kullback-Leibler (KL) divergence, which encourages a simple approximate posterior. Based on this insight, we replace the empirical loss induced by the approximate posterior with the general loss over the entire data-label distribution in Theorem \ref{thm:general_loss} to improve the generalization ability.
Inspired by sharpness-aware minimization \cite{foret2021sharpnessaware}, we develop an upper-bound of the general loss in Theorem \ref{thm:general_loss}, leading us to formulate the sharpness-aware posterior in Theorem \ref{thm:Bayses_SAM_solution}. Finally, we devise the Bayesian setting and variational approach for the sharpness-aware posterior. Overall, our contributions in this paper can be summarized as follows:
\begin{itemize}
    \item We propose and develop theories, the Bayesian setting, and the variational inference approach for the sharpness-aware posterior. This posterior enables us to sample a set of flat models that improve the model generalization ability. We note that SAM \cite{foret2021sharpnessaware} only considers the sharpness for a single model, while ours is the first work studying the concept and theory of the sharpness for a distribution $\mathbb{Q}$ over models. Additionally, the proof of Theorem \ref{thm:general_loss} is very challenging, elegant, and complicated because of the infinite number of models in the support of $\mathbb{Q}$.  
    \item We conduct extensive experiments by leveraging our sharpness-aware posterior with the state-of-the-art and well-known BNNs, including \textit{BNNs with an approximate Gaussian distribution \cite{kingma2015variational}}, \textit{BNNs with stochastic gradient Langevin dynamics
(SGLD) \cite{welling2011bayesian}}, \textit{MC-Dropout \cite{gal2016-mcdropout}}, \textit{Bayesian deep ensemble \cite{lakshminarayanan2017simple}}, and \textit{SWAG \cite{Maddox2019-swag}} to demonstrate that the flat-seeking counterparts consistently outperform the corresponding approaches in all metrics of interest, including the ensemble accuracy, expected calibration error (ECE), and negative log-likelihood (NLL). 
    
\end{itemize}


\section{Related Work}
\label{sec:relatedwork}
\subsection{Bayesian Neural Networks}
\textbf{Markov chain Monte Carlo (MCMC):} This approach allows us to sample multiple models from the posterior distribution and was well-known for inference with neural networks through the Hamiltonian Monte Carlo (HMC) \cite{Neal_Bayesian}. However, HMC requires the estimation of full gradients, which is computationally expensive for neural networks. To make the HMC
framework practical, Stochastic Gradient HMC (SGHMC) \cite{chen2014stochastic} enables
stochastic gradients to be used in Bayesian inference, crucial for both scalability and exploring a space of solutions. Alternatively, stochastic gradient Langevin dynamics
(SGLD) \cite{welling2011bayesian} employs first-order Langevin dynamics in the stochastic gradient setting. Additionally, Stein Variational Gradient Descent (SVGD) \cite{liu2016stein} maintains a set of particles to gradually approach a posterior distribution. Theoretically,
all SGHMC, SGLD, and SVGD asymptotically sample from the posterior in the limit of infinitely small
step sizes. 

\textbf{Variational Inference}: This approach uses an approximate posterior distribution in a family to estimate the true posterior distribution by maximizing a variational lower bound. \cite{graves2011practical}  suggests fitting a Gaussian variational posterior approximation over the weights of neural networks, which was generalized in \cite{kingma2013auto, kingma2015variational, blundell2015weight}, using the reparameterization trick for training deep latent variable models. To provide posterior approximations
with richer expressiveness, many extensive studies have been proposed. Notably, \cite{louizos2017multiplicative} treats the weight matrix as a whole via a matrix
variate Gaussian \cite{gupta2018matrix} and approximates the posterior based on this parameterization.  Several later works have inspected this distribution to examine different structured representations for the
variational Gaussian posterior, such as Kronecker-factored \cite{zhang2018noisy, ritter2018scalable, rossi2020walsh}, k-tied distribution \cite{swiatkowski2020k}, non-centered or rank-1 parameterization \cite{ghosh2018structured, dusenberry2020efficient}. Another recipe to represent the true covariance
matrix of Gaussian posterior is through the low-rank approximation \cite{ong2018gaussian, tomczak2020efficient, khan2018fast, Maddox2019-swag}. 

\textbf{Dropout Variational Inference:} This approach utilizes dropout to characterize approximate posteriors. Typically, \cite{gal2016-mcdropout} and \cite{kingma2015variational} use this principle to propose Bayesian
Dropout inference methods such as MC Dropout and Variational Dropout.  Concrete dropout
\cite{gal2017concrete} extends this idea to optimize the dropout probabilities. Variational Structured Dropout \cite{nguyen2021structured} employs Householder transformation to learn a structured representation for multiplicative Gaussian noise in the Variational Dropout method.    

\subsection{Flat Minima} 
Flat minimizers have been found to improve the generalization ability of neural networks. This is because they enable models to find wider local minima, which makes them more robust against shifts between train and test sets~\cite{DBLP:conf/iclr/JiangNMKB20, DBLP:conf/nips/PetzkaKASB21, DBLP:conf/uai/DziugaiteR17, vaOT-MDR2023}. The relationship between generalization ability and the width of minima has been investigated theoretically and empirically in many studies, notably~\cite{DBLP:conf/nips/HochreiterS94, neyshabur2017exploring, dinh2017sharp, fort2019emergent}. Moreover, various methods seeking flat minima have been proposed in~\cite{DBLP:conf/iclr/PereyraTCKH17, Chaudhari2017EntropySGDBG, DBLP:conf/iclr/KeskarMNST17, DBLP:conf/uai/IzmailovPGVW18, foret2021sharpnessaware, vaOT-MDR2023}.
Typically,~\cite{DBLP:conf/iclr/KeskarMNST17,Jastrzebski2017ThreeFI, wei2020implicit} investigate the impacts of different training factors such as batch size, learning rate, covariance of gradient, and dropout on the flatness of found minima. Additionally, several approaches pursue wide local minima by adding regularization terms to the loss function~\cite{DBLP:conf/iclr/PereyraTCKH17, Zhang2018DeepML, zhang2019your, Chaudhari2017EntropySGDBG}. Examples of such regularization terms include softmax output's low entropy penalty~\cite{DBLP:conf/iclr/PereyraTCKH17} and distillation losses~\cite{Zhang2018DeepML, zhang2019your}.

    

    SAM, a method that aims to minimize the worst-case loss around the current model by seeking flat regions, has recently gained attention due to its scalability and effectiveness compared to previous methods~\cite{foret2021sharpnessaware, truong2023rsam}. SAM has been widely applied in various domains and tasks, such as meta-learning bi-level optimization~\cite{abbas2022sharp}, federated learning~\cite{qu2022generalized}, multi-task learning \cite{phan2022improving}, where it achieved tighter convergence rates and proposed generalization bounds. SAM has also demonstrated its generalization ability in vision models~\cite{chen2021vision}, language models~\cite{bahri-etal-2022-sharpness}, domain generalization~\cite{cha2021swad}, and multi-task learning~\cite{phan2022improving}. Some researchers have attempted to improve SAM by exploiting its geometry~\cite{kwon2021asam, kim22f}, additionally minimizing the surrogate gap~\cite{zhuang2022surrogate}, and speeding up its training time~\cite{du2022sharpness, liu2022towards}. Regarding the behavior of SAM, \cite{kaddour2022when} empirically studied the difference in sharpness obtained by SAM \cite{foret2021sharpnessaware} and SWA \cite{izmailov2018averaging}, \cite{mollenhoff2022sam} showed that SAM is an optimal Bayes relaxation of the standard Bayesian inference with a normal posterior, while \cite{vaOT-MDR2023} proved that distribution robustness \cite{blanchet2019quantifying,phan23a} is a probabilistic extension of SAM.

\section{Proposed Framework}
\label{sec:framework}
In what follows, we present the technicality of our proposed sharpness-aware posterior. Particularly, Section \ref{subsec:motivation} introduces the problem setting and motivation for our sharpness-aware posterior. Section \ref{subsec:theory_development} is dedicated to our theory development, while Section \ref{sec:sa_Bayesian_setting} is used to describe the Bayesian setting and variational inference approach for our sharpness-aware posterior. 
\subsection{Problem Setting and Motivation} \label{subsec:motivation}
We aim to develop Sharpness-Aware Bayesian Neural Networks
(SA-BNN). Consider a family of neural networks $f_{\theta}(x)$ with $\theta\in\Theta$
and a training set $\mathcal{S}=\{(x_{1},y_{1}),...,(x_{n},y_{n})\}$
where $\left(x_{i},y_{i}\right)\sim\mathcal{D}$. We wish to learn
a posterior distribution $\mathbb{Q}_{S}^{SA}$ with the density function
$q^{SA}(\theta|\mathcal{S})$ such that any model $\theta\sim\mathbb{Q}^{SA}_S$
is aware of the sharpness when predicting over the training set $\mathcal{S}$. 

We depart with the standard posterior 
\[
q(\theta\mid\mathcal{S})\propto\prod_{i=1}^{n}p(y_{i}\mid x_{i}, \mathcal{S},\theta)p(\theta),
\]
 where the prior distribution $\mathbb{P}$ has the density function
$p(\theta)$ and the likelihood has the form 
\begin{align*}
p\left(y\mid x,\mathcal{S},\theta\right) & \propto\exp\left\{ -\frac{\lambda}{|\mathcal{S}|}\ell\left(f_{\theta}(x),y\right)\right\}
  =\exp\left\{ -\frac{\lambda}{n}\ell\left(f_{\theta}(x),y\right)\right\} 
\end{align*}
with the loss function $\ell$. The standard posterior $\mathbb{Q}_{\mathcal{S}}$
has the density function defined as
\begin{equation}
q(\theta\mid\mathcal{S})\propto\exp\left\{ -\frac{\lambda}{n}\sum_{i=1}^{n}\ell\left(f_{\theta}\left(x_{i}\right),y_{i}\right)\right\} p(\theta),\label{eq:posterior}
\end{equation}
where $\lambda\geq0$ is a regularization parameter.

We define the general and empirical losses as follows:
\[
\mathcal{L}_{\mathcal{D}}\left(\theta\right)=\mathbb{E}_{\left(x,y\right)\sim\mathcal{D}}\left[\ell\left(f_{\theta}\left(x\right),y\right)\right].
\]
\[
\mathcal{L}_{\mathcal{S}}\left(\theta\right)=\mathbb{E}_{\left(x,y\right)\sim\mathcal{\mathcal{S}}}\left[\ell\left(f_{\theta}\left(x\right),y\right)\right]=\frac{1}{n}\sum_{i=1}^{n}\ell\left(f_{\theta}\left(x_{i}\right),y_{i}\right).
\]
Basically, the general loss is defined as the expected loss over the entire data-label distribution $\mathcal{D}$, while the empirical loss is defined as the empirical loss over a specific training set $\mathcal{S}$.

The standard posterior in Eq. (\ref{eq:posterior}) can be rewritten
as
\begin{equation}
q(\theta\mid\mathcal{S})\propto\exp\left\{ -\lambda\mathcal{L}_{\mathcal{S}}\left(\theta\right)\right\} p(\theta).\label{eq:posterior_loss}
\end{equation}

Given a distribution $\mathbb{Q}$ with the density function $q\left(\theta\right)$
over the model parameters $\theta\in\Theta$, we define the empirical
and general losses over this model distribution $\mathbb{Q}$ as
\[
\mathcal{L_{S}}\left(\mathbb{Q}\right)=\int_{\Theta}\mathcal{L}_{\mathcal{S}}\left(\theta\right)d\mathbb{Q}\left(\theta\right)=\int_{\Theta}\mathcal{L}_{\mathcal{S}}\left(\theta\right)q\left(\theta\right)d\theta.
\]
\[
\mathcal{L_{D}}\left(\mathbb{Q}\right)=\int_{\Theta}\mathcal{L}_{\mathcal{D}}\left(\theta\right)d\mathbb{Q}\left(\theta\right)=\int_{\Theta}\mathcal{L}_{\mathcal{D}}\left(\theta\right)q\left(\theta\right)d\theta.
\]
Specifically, the general loss over the model distribution $\mathbb{Q}$ is defined as the expectation of the general losses incurred by the models sampled from this distribution, while the empirical loss over the model distribution $\mathbb{Q}$ is defined as the expectation of the empirical losses incurred by the models sampled from this distribution.

\subsection{Our Theory Development} \label{subsec:theory_development}
We now present the theory development for the sharpness-aware posterior whose proofs can be found in the supplementary material. Inspired by the Gibbs form of the standard posterior $\mathbb{Q}_{\mathcal{S}}$
in Eq. (\ref{eq:posterior_loss}), we establish the following theorem
to connect the standard posterior $\mathbb{Q}_{\mathcal{S}}$ with
the density $q(\theta\mid\mathcal{S})$ and the empirical loss $\mathcal{L_{S}}\left(\mathbb{Q}\right)$ \cite{catoni2007pac, alquier2016properties}.

\begin{theorem}
\label{thm:Gibbs_posterior}Consider the following optimization problem
\begin{equation}
\min_{\mathbb{Q<<\mathbb{P}}}\left\{ \lambda\mathcal{L}_{S}\left(\mathbb{Q}\right)+KL\left(\mathbb{Q},\mathbb{P}\right)\right\} ,\label{op:empirical}
\end{equation}
where we search over $\mathbb{Q}$ absolutely continuous w.r.t. $\mathbb{P}$
and $KL\left(\cdot,\cdot\right)$ is the Kullback-Leibler divergence.
This optimization has a closed-form optimal solution $\mathbb{Q}^{*}$
with the density
\[
q^{*}\left(\theta\right)\propto\exp\left\{ -\lambda\mathcal{L}_{\mathcal{S}}\left(\theta\right)\right\} p(\theta),
\]
which is exactly the standard posterior $\mathbb{Q}_{\mathcal{S}}$
with the density $q(\theta\mid\mathcal{S})$.
\end{theorem}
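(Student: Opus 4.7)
The plan is to verify this by a direct completing-the-square manipulation on the KL divergence, exactly in the style of the Donsker--Varadhan variational formula. Define the candidate minimizer $\mathbb{Q}^{*}$ through its density
\[
q^{*}(\theta)=\frac{1}{Z}\exp\{-\lambda\mathcal{L}_{\mathcal{S}}(\theta)\}\,p(\theta),\qquad Z=\int_{\Theta}\exp\{-\lambda\mathcal{L}_{\mathcal{S}}(\theta)\}\,p(\theta)\,d\theta,
\]
and assume $Z<\infty$ (mild integrability, automatic for nonnegative bounded losses). The goal is then to show that the objective in \eqref{op:empirical} equals $\mathrm{KL}(\mathbb{Q},\mathbb{Q}^{*})-\log Z$ plus a term independent of $\mathbb{Q}$, so that nonnegativity of KL immediately identifies $\mathbb{Q}^{*}$ as the unique minimizer.

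Concretely, for any $\mathbb{Q}\ll\mathbb{P}$, I would expand
\[
\lambda\mathcal{L}_{\mathcal{S}}(\mathbb{Q})+\mathrm{KL}(\mathbb{Q},\mathbb{P})=\int_{\Theta}q(\theta)\Bigl[\lambda\mathcal{L}_{\mathcal{S}}(\theta)+\log\tfrac{q(\theta)}{p(\theta)}\Bigr]d\theta,
\]
and then rewrite the bracket by inserting the definition of $q^{*}$: since $\lambda\mathcal{L}_{\mathcal{S}}(\theta)=-\log q^{*}(\theta)+\log p(\theta)-\log Z$, the integrand becomes $q(\theta)\log\frac{q(\theta)}{q^{*}(\theta)}-q(\theta)\log Z$. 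Integrating gives the clean identity
\[
\lambda\mathcal{L}_{\mathcal{S}}(\mathbb{Q})+\mathrm{KL}(\mathbb{Q},\mathbb{P})=\mathrm{KL}(\mathbb{Q},\mathbb{Q}^{*})-\log Z.
\]
Because the right-hand side has only the first term depending on $\mathbb{Q}$, and that term is $\geq 0$ with equality iff $\mathbb{Q}=\mathbb{Q}^{*}$ (a.s.), the conclusion follows.

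The only mildly delicate step is ensuring the manipulations are valid for every $\mathbb{Q}\ll\mathbb{P}$ in the feasible set, in particular that $\mathbb{Q}\ll\mathbb{Q}^{*}$ (needed for $\mathrm{KL}(\mathbb{Q},\mathbb{Q}^{*})$ to be meaningful) and that $\mathcal{L}_{\mathcal{S}}(\mathbb{Q})$ is well defined. Both are immediate: $\mathbb{Q}^{*}$ has the same null sets as $\mathbb{P}$ since $q^{*}/p=Z^{-1}\exp\{-\lambda\mathcal{L}_{\mathcal{S}}\}$ is strictly positive, so $\mathbb{Q}\ll\mathbb{P}\Leftrightarrow\mathbb{Q}\ll\mathbb{Q}^{*}$; and if $\mathrm{KL}(\mathbb{Q},\mathbb{P})=\infty$ the inequality is vacuous, while otherwise the change-of-measure $\log(q/p)\in L^{1}(\mathbb{Q})$ legitimizes splitting the integrand. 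I do not anticipate a real obstacle here beyond carefully separating the finite-KL and infinite-KL cases; the heart of the argument is the one-line algebraic identity above.
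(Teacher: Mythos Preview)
Your proof is correct, but it proceeds along a different line from the paper. The paper argues via a Lagrangian: it introduces a multiplier $\alpha$ for the normalization constraint $\int q(\theta)\,d\theta=1$, takes the (formal) functional derivative of
\[
L(q,\alpha)=\lambda\int\mathcal{L}_{\mathcal{S}}(\theta)q(\theta)\,d\theta+\int q(\theta)\log\frac{q(\theta)}{p(\theta)}\,d\theta+\alpha\Bigl(\int q(\theta)\,d\theta-1\Bigr)
\]
with respect to $q(\theta)$, sets it to zero, and reads off $q(\theta)\propto\exp\{-\lambda\mathcal{L}_{\mathcal{S}}(\theta)\}p(\theta)$. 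Your approach instead establishes the exact identity $\lambda\mathcal{L}_{\mathcal{S}}(\mathbb{Q})+\mathrm{KL}(\mathbb{Q},\mathbb{P})=\mathrm{KL}(\mathbb{Q},\mathbb{Q}^{*})-\log Z$ and invokes nonnegativity of KL. The Lagrangian route is shorter to write but only locates a stationary point and leaves optimality, uniqueness, and the legitimacy of the pointwise functional derivative implicit; your Donsker--Varadhan rewriting is slightly longer but gives global optimality and uniqueness in one stroke, and your remarks on absolute continuity ($\mathbb{Q}\ll\mathbb{P}\Leftrightarrow\mathbb{Q}\ll\mathbb{Q}^{*}$) and the finite/infinite-KL split make the argument self-contained.
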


Theorem \ref{thm:Gibbs_posterior} reveals that we need to find the
posterior $\mathbb{Q}_{\mathcal{S}}$ balancing between optimizing
its empirical loss $\mathcal{L}_{S}\left(\mathbb{Q}\right)$ and simplicity via
$KL\left(\mathbb{Q},\mathbb{P}\right)$. However, minimizing the empirical
loss $\mathcal{L}_{S}\left(\mathbb{Q}\right)$ only ensures the correct
predictions for the training examples in $\mathcal{S}$, hence possibly
encountering overfitting. Therefore, it is desirable to replace the empirical
loss by the general loss to combat overfitting. 

To mitigate overfitting, in (\ref{op:empirical}), we replace the
empirical loss by the general loss and solve the following
optimization problem (OP):
\begin{equation}
\min_{\mathbb{Q<<\mathbb{P}}}\left\{ \lambda\mathcal{L}_{\mathcal{D}}\left(\mathbb{Q}\right)+KL\left(\mathbb{Q},\mathbb{P}\right)\right\} .\label{op:general}
\end{equation}

Notably, solving the optimization problem (OP) in (\ref{op:general})
is generally intractable. To make it tractable, we find its upper-bound
which is relevant to the sharpness of a distribution $\mathbb{Q}$ over models as shown in the following theorem.
\begin{theorem}
\label{thm:general_loss}Assume that $\Theta$ is a compact set. Under some mild conditions, given
any $\delta\in[0;1]$, with the probability at least $1-\delta$ over
the choice of $\mathcal{S}\sim\mathcal{D}^{n}$, for any distribution
$\mathbb{Q}$, we have
\[
\mathcal{L}_{\mathcal{D}}\left(\mathbb{Q}\right)\leq\mathbb{E}_{\theta\sim\mathbb{Q}}\left[\max_{\theta':\norm{\theta'-\theta}\leq\rho}\mathcal{L}_{\mathcal{S}}\left(\theta'\right)\right]+f\left(\max_{\theta\in\Theta}\norm{\theta}^{2},n\right),
\]
where $f$ is a non-decreasing function w.r.t. the first variable and approaches $0$ when the training size $n$ approaches $\infty$.
\end{theorem}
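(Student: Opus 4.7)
The plan is to derive the inequality via a PAC-Bayes argument applied to a Gaussian-smoothed version of $\mathbb{Q}$, in the spirit of the single-model SAM bound of \cite{foret2021sharpnessaware}, and then exploit compactness of $\Theta$ to convert $\mathbb{Q}$-moments of $\norm{\theta}^2$ into the uniform bound $\max_{\theta\in\Theta}\norm{\theta}^2$. Concretely, I would fix a data-free Gaussian prior $\mathbb{P}_0 = \mathcal{N}(0,\sigma_0^2 I_d)$ and, for any candidate $\mathbb{Q}$ supported in $\Theta$, work with the smoothed posterior $\tilde{\mathbb{Q}} = \mathbb{Q}\ast\mathcal{N}(0,\sigma^2 I_d)$. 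Under mild regularity on $\ell$ (bounded by $B$ and $L$-Lipschitz in $\theta$), the classical McAllester PAC-Bayes bound yields, with probability at least $1-\delta$ over $\mathcal{S}\sim\mathcal{D}^n$, simultaneously for every choice of posterior,
\[
\mathbb{E}_{\theta'\sim\tilde{\mathbb{Q}}}\mathcal{L}_{\mathcal{D}}(\theta') \leq \mathbb{E}_{\theta'\sim\tilde{\mathbb{Q}}}\mathcal{L}_{\mathcal{S}}(\theta') + \sqrt{\tfrac{KL(\tilde{\mathbb{Q}},\mathbb{P}_0)+\log(1/\delta)}{2(n-1)}}.
\]

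The remaining work is three reductions to match the target form. First, on the left-hand side, Lipschitzness gives $|\mathbb{E}_{\theta'\sim\tilde{\mathbb{Q}}}\mathcal{L}_{\mathcal{D}}(\theta') - \mathcal{L}_{\mathcal{D}}(\mathbb{Q})| \leq L\,\mathbb{E}\norm{\epsilon} = O(L\sigma\sqrt{d})$. Second, on the right-hand side, splitting on $\{\norm{\epsilon}\leq\rho\}$ and using $\chi_d$ tail bounds together with $\ell\leq B$ yields $\mathbb{E}_{\theta'\sim\tilde{\mathbb{Q}}}\mathcal{L}_{\mathcal{S}}(\theta') \leq \mathbb{E}_{\theta\sim\mathbb{Q}}\max_{\norm{\theta'-\theta}\leq\rho}\mathcal{L}_{\mathcal{S}}(\theta') + B\,\Pr(\norm{\epsilon}>\rho)$, which is exponentially small once $\rho^2 \gg d\sigma^2$. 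Third, for the KL term, convexity of $KL(\cdot,\mathbb{P}_0)$ in its first argument yields $KL(\tilde{\mathbb{Q}},\mathbb{P}_0) \leq \mathbb{E}_{\theta\sim\mathbb{Q}}KL(\mathcal{N}(\theta,\sigma^2 I_d),\mathbb{P}_0) = \tfrac{1}{2}\bigl[\tfrac{d\sigma^2+\mathbb{E}_{\mathbb{Q}}\norm{\theta}^2}{\sigma_0^2} - d + d\log\tfrac{\sigma_0^2}{\sigma^2}\bigr]$, after which compactness gives $\mathbb{E}_{\mathbb{Q}}\norm{\theta}^2 \leq \max_{\theta\in\Theta}\norm{\theta}^2$. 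Tuning $\sigma,\sigma_0$ so that all three slacks decay at rate $O(n^{-1/2})$ collects everything into a function $f(\max_{\theta\in\Theta}\norm{\theta}^2,n)$ that is visibly non-decreasing in its first argument and vanishes as $n\to\infty$.

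The central difficulty, flagged by the authors as the reason the proof is delicate for $\mathbb{Q}$ with infinite support, is that applying PAC-Bayes directly with $\mathbb{Q}$ as the posterior would bound $\mathcal{L}_{\mathcal{D}}(\mathbb{Q})$ by $\mathcal{L}_{\mathcal{S}}(\mathbb{Q})$, not by the sharpness term $\mathbb{E}_{\mathbb{Q}}\max_{\norm{\theta'-\theta}\leq\rho}\mathcal{L}_{\mathcal{S}}(\theta')$. Introducing perturbations through the convolved posterior $\mathbb{Q}\ast\mathcal{N}$ is what brings in the $\max$, but one then has to control $KL(\mathbb{Q}\ast\mathcal{N},\mathbb{P}_0)$, and $\mathbb{Q}\ast\mathcal{N}$ is not in any parametric family --- its density is an intractable mixture, so no closed-form KL is available. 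The convexity-of-KL inequality above is what makes the argument go through: it reduces the bound to a per-particle Gaussian-vs-Gaussian KL depending only on the first moment $\mathbb{E}_{\mathbb{Q}}\norm{\theta}^2$, which compactness controls uniformly. A secondary technical point is that the optimal $\sigma$ depends on $\max_{\theta\in\Theta}\norm{\theta}^2$, which I would handle by a standard union bound over a geometric grid of $\sigma$ values --- inflating $\log(1/\delta)$ by only a $\log\log n$ factor and leaving the $n^{-1/2}$ rate unaffected.
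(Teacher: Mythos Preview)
Your proposal is correct and follows the same high-level template as the paper --- Gaussian smoothing of $\mathbb{Q}$, PAC-Bayes with a centered Gaussian prior, a chi-square tail bound to convert the Gaussian perturbation into a $\rho$-ball maximum, and convexity of $KL(\cdot,\mathbb{P}_0)$ to reduce the mixture KL to per-particle Gaussian-vs-Gaussian KLs --- but the execution is genuinely different and in fact more economical. The paper does \emph{not} work directly with the convolution $\mathbb{Q}\ast\mathcal{N}(0,\sigma^2 I)$: it first discretises $\mathbb{Q}$ to a finite mixture $\sum_i\pi_i\delta_{\theta_i}$, projects each atom onto a nearby point of an $\epsilon$-net of $\Theta$ (with $\epsilon=n^{-1/(2k)}$, $k=\dim\Theta$), applies PAC-Bayes to the resulting finite Gaussian mixture, and uses a union bound over the at most $N=\mathcal{N}(\Theta,\epsilon)$ distinct Gaussian components to get $\norm{\epsilon_i}<\rho$ \emph{simultaneously} for all $i$. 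The passage from net points back to the original atoms, and from discrete $\mathbb{Q}$ to general $\mathbb{Q}$, is handled by a modulus-of-continuity lemma and a weak-convergence limit. This produces extra slack terms $2\omega(n^{-1/(2k)})$ that decay extremely slowly in $n$.

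Your route sidesteps all of this: because you only need the \emph{expectation} $\mathbb{E}_\epsilon[\mathcal{L}_{\mathcal{S}}(\theta+\epsilon)]$ rather than simultaneous control of each $\epsilon_i$, the single tail probability $\Pr(\norm{\epsilon}>\rho)$ suffices and no covering or discretisation is needed. The price is that you assume a quantitative Lipschitz constant $L$ and loss bound $B$, whereas the paper's modulus of continuity works under bare continuity on a compact domain (plus an explicit smoothing assumption $\mathcal{L}_{\mathcal{D}}(\mathbb{Q})\leq\mathbb{E}_{\theta\sim\mathbb{Q}}\mathbb{E}_\epsilon\mathcal{L}_{\mathcal{D}}(\theta+\epsilon)$ in lieu of your Lipschitz control of the LHS). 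One minor point: since $R^2=\max_{\theta\in\Theta}\norm{\theta}^2$ is a fixed constant of the problem and not posterior-dependent, you can simply set $\sigma,\sigma_0$ as deterministic functions of $R$ and $n$; the geometric-grid union bound over $\sigma$ is unnecessary here.
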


We note that the proof of Theorem \ref{thm:general_loss} is not a trivial extension of sharpness-aware minimization because we need to tackle the general and empirical losses over a distribution $\mathbb{Q}$. To make explicit our sharpness over a distribution $\mathbb{Q}$ on models, we rewrite the upper-bound of the inequality as
\[
\mathbb{E}_{\theta\sim\mathbb{Q}}\left[\max_{\theta':\norm{\theta'-\theta}\leq\rho}\mathcal{L}_{\mathcal{S}}\left(\theta'\right)-\mathcal{L}_{\mathcal{S}}\left(\theta\right)\right]+\mathcal{L}_{\mathcal{S}}\left(\mathbb{Q}\right)+f\left(\max_{\theta\in\Theta}\norm{\theta}^{2},n\right),
\]
where  the first term $\mathbb{E}_{\theta\sim\mathbb{Q}}\left[\max_{\theta':\norm{\theta'-\theta}\leq\rho}\mathcal{L}_{\mathcal{S}}\left(\theta'\right)-\mathcal{L}_{\mathcal{S}}\left(\theta\right)\right]$ can be regarded as \textit{the sharpness over the distribution $\mathbb{Q}$ on the model space} and the last term $f\left(\max_{\theta\in\Theta}\norm{\theta}^{2},n\right)$ is a constant.   

Moreover, inspired by Theorem \ref{thm:general_loss}, we propose solving the
following OP which forms an upper-bound of the desirable OP in (\ref{op:general})
\begin{equation}
\min_{\mathbb{Q<<\mathbb{P}}}\left\{ \lambda\mathbb{E}_{\theta\sim\mathbb{Q}}\left[\max_{\theta':\norm{\theta'-\theta}\leq\rho}\mathcal{L}_{\mathcal{S}}\left(\theta'\right)\right]+KL\left(\mathbb{Q},\mathbb{P}\right)\right\} .\label{eq:Bayes_SAM_OP}
\end{equation}

The following theorem characterizes the optimal solution of the OP
in (\ref{eq:Bayes_SAM_OP}).
\begin{theorem}
    \label{thm:Bayses_SAM_solution}The optimal solution the OP in (\ref{eq:Bayes_SAM_OP})
is the sharpness-aware posterior distribution $\mathbb{Q}_{S}^{SA}$
with the density function $q^{SA}(\theta|\mathcal{S})$:
\begin{align*}
q^{SA}(\theta|\mathcal{S}) & \propto\exp\left\{ -\lambda\max_{\theta':\norm{\theta'-\theta}\leq\rho}\mathcal{L}_{\mathcal{S}}\left(\theta'\right)\right\} p\left(\theta\right)
 =\exp\left\{ -\lambda\mathcal{L}_{\mathcal{S}}\left(s\left(\theta\right)\right)\right\} p\left(\theta\right),
\end{align*}
where we have defined $s\left(\theta\right)=\argmax{\theta':\norm{\theta'-\theta}\leq\rho}\mathcal{L}_{\mathcal{S}}\left(\theta'\right)$.
\end{theorem}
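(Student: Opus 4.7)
\textbf{Proof Plan for Theorem \ref{thm:Bayses_SAM_solution}.}

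The plan is to mirror the Gibbs variational argument used for Theorem \ref{thm:Gibbs_posterior}, treating the worst-case quantity $\tilde{\ell}(\theta) := \max_{\theta' : \|\theta'-\theta\|\leq\rho}\mathcal{L}_{\mathcal{S}}(\theta') = \mathcal{L}_{\mathcal{S}}(s(\theta))$ as a surrogate empirical loss. Once the outer expectation is recognized as $\mathcal{L}_{\mathcal{S}}(\mathbb{Q}^{SA})$-style integration against a fixed measurable map on $\Theta$, the objective in (\ref{eq:Bayes_SAM_OP}) has precisely the same algebraic structure as (\ref{op:empirical}), and the closed-form minimizer follows from the standard Donsker--Varadhan / Gibbs identity.

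First, I would verify that $\tilde{\ell}$ is a well-defined, measurable, bounded function on $\Theta$. Since $\Theta$ is compact and $\mathcal{L}_{\mathcal{S}}(\cdot)$ is continuous (a finite average of losses evaluated at the continuous map $\theta \mapsto f_\theta(x_i)$), the maximum over the closed ball $\{\theta' : \|\theta'-\theta\|\leq\rho\}\cap\Theta$ is attained, and $\theta\mapsto\tilde{\ell}(\theta)$ is upper semicontinuous, hence Borel measurable. Boundedness also ensures that the normalizing constant
\begin{equation*}
Z := \int_{\Theta} \exp\bigl\{-\lambda\,\tilde{\ell}(\theta)\bigr\}\,p(\theta)\,d\theta
\end{equation*}
is finite and strictly positive, so the candidate density $q^{SA}(\theta\mid\mathcal{S}) = Z^{-1}\exp\{-\lambda\tilde{\ell}(\theta)\}\,p(\theta)$ is a bona fide probability density absolutely continuous with respect to $\mathbb{P}$.

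Next, for any $\mathbb{Q}\ll\mathbb{P}$ with density $q$, I would expand the KL divergence and insert $q^{SA}$ by writing
\begin{equation*}
\lambda\,\mathbb{E}_{\theta\sim\mathbb{Q}}[\tilde{\ell}(\theta)] + KL(\mathbb{Q},\mathbb{P})
= \int q(\theta)\Bigl[\lambda\tilde{\ell}(\theta) + \log\tfrac{q(\theta)}{p(\theta)}\Bigr]\,d\theta.
\end{equation*}
Using $\lambda\tilde{\ell}(\theta) - \log p(\theta) = -\log q^{SA}(\theta\mid\mathcal{S}) - \log Z$, the right-hand side rearranges to
\begin{equation*}
\int q(\theta)\,\log\frac{q(\theta)}{q^{SA}(\theta\mid\mathcal{S})}\,d\theta \;-\; \log Z
\;=\; KL\bigl(\mathbb{Q},\,\mathbb{Q}_{S}^{SA}\bigr) - \log Z.
\end{equation*}
Since $-\log Z$ does not depend on $\mathbb{Q}$ and the KL divergence is non-negative, attaining zero only when $\mathbb{Q}=\mathbb{Q}_{S}^{SA}$, the unique minimizer is $\mathbb{Q}_{S}^{SA}$ with density $q^{SA}$. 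The equivalent form $\exp\{-\lambda\mathcal{L}_{\mathcal{S}}(s(\theta))\}p(\theta)$ is just the definition $s(\theta) = \arg\max_{\theta' : \|\theta'-\theta\|\leq\rho}\mathcal{L}_{\mathcal{S}}(\theta')$ substituted back in.

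The main subtlety, and the only step that genuinely differs from Theorem \ref{thm:Gibbs_posterior}, is the measurability/integrability of the worst-case loss $\tilde{\ell}$: one must make sure that replacing a continuous $\mathcal{L}_{\mathcal{S}}$ by the sup-over-ball operation does not break the interchange of integration and the outer expectation, and that the interior maximum is well-posed so $s(\theta)$ exists (any Borel selection works, and the value $\tilde{\ell}(\theta)$ is independent of the choice). Everything else is the direct Gibbs argument, so no new ideas beyond Theorem \ref{thm:Gibbs_posterior} are required once this regularity is in place.
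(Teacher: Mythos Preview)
Your proposal is correct, but it takes a different route from the paper. The paper's proof mirrors its proof of Theorem~\ref{thm:Gibbs_posterior} almost verbatim: it writes the Lagrangian with a multiplier $\alpha$ for the constraint $\int q(\theta)\,d\theta = 1$, takes the functional derivative with respect to $q(\theta)$, sets it to zero, and reads off $q(\theta)\propto\exp\{-\lambda\mathcal{L}_{\mathcal{S}}(s(\theta))\}p(\theta)$. By contrast, you use the Donsker--Varadhan decomposition, rewriting the objective as $KL(\mathbb{Q},\mathbb{Q}_S^{SA}) - \log Z$ and invoking non-negativity of KL to identify both the minimizer and the minimum value. Your argument is arguably cleaner: it avoids the informal step of differentiating with respect to a density, it yields uniqueness for free, and it makes explicit the regularity needed for $\tilde\ell$ (measurability, finiteness of $Z$), which the paper glosses over. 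The paper's Lagrangian approach is shorter but more heuristic; yours is the standard rigorous proof of the Gibbs variational principle.
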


Theorem \ref{thm:Bayses_SAM_solution} describes the close form of the sharpness-aware posterior distribution $\mathbb{Q}_{S}^{SA}$
with the density function $q^{SA}(\theta|\mathcal{S})$. Based on this characterization, in what follows, we introduce the SA Bayesian setting that sheds lights on its variational approach.

\subsection{Sharpness-Aware Bayesian Setting and Its Variational Approach} \label{sec:sa_Bayesian_setting}
\textbf{Bayesian Setting:} To promote the Bayesian setting for sharpness-aware posterior distribution
$\mathbb{Q}_{S}^{SA}$, we examine the sharpness-aware likelihood
\begin{align*}
p^{SA}\left(y\mid x,\mathcal{S},\theta\right) & \propto\exp\left\{ -\frac{\lambda}{|\mathcal{S}|}\ell\left(f_{s\left(\theta\right)}(x),y\right)\right\} 
  =\exp\left\{ -\frac{\lambda}{n}\ell\left(f_{s\left(\theta\right)}(x),y\right)\right\} ,
\end{align*}
where $s\left(\theta\right)=\argmax{\theta':\norm{\theta'-\theta}\leq\rho}\mathcal{L}_{\mathcal{S}}\left(\theta'\right)$.

With this predefined sharpness-aware likelihood, we can recover the
sharpness-aware posterior distribution $\mathbb{Q}_{S}^{SA}$ with
the density function $q^{SA}(\theta|\mathcal{S})$:
\[
q^{SA}(\theta|\mathcal{S})\propto\prod_{i=1}^{n}p^{SA}\left(y_{i}\mid x_{i},\mathcal{S},\theta\right)p\left(\theta\right).
\]
\textbf{Variational inference for the sharpness-aware posterior distribution:} We now develop the variational inference for the sharpness-aware posterior
distribution. Let denote $X=\left[x_{1},...,x_{n}\right]$ and $Y=\left[y_{1},...,y_{n}\right]$.
Considering an approximate posterior family $\left\{ q_{\phi}\left(\theta\right):\phi\in\Phi\right\} $,
we have
\begin{align*}
 & \log p^{SA}\left(Y\mid X,\mathcal{S}\right)=\int_{\Theta}q_{\phi}\left(\theta\right)\log p^{SA}\left(Y\mid X,\mathcal{S}\right)d\theta\\
 & =\int_{\Theta}q_{\phi}\left(\theta\right)\log\frac{p^{SA}\left(Y\mid\theta,X,\mathcal{S}\right)p\left(\theta\right)}{q_{\phi}\left(\theta\right)}\frac{q_{\phi}\left(\theta\right)}{q^{SA}(\theta|\mathcal{S})}d\theta\\
 & =\mathbb{E}_{q_{\phi}\left(\theta\right)}\left[\sum_{i=1}^{n}\log p^{SA}\left(y_{i}\mid x_{i},\mathcal{S},\theta\right)\right]-KL\left(q_{\phi},p\right) +KL\left(q_{\phi},q^{SA}\right).
\end{align*}

It is obvious that we need to maximize the following lower bound for
maximally reducing the gap $KL\left(q_{\phi},q^{SA}\right)$:
\[
\max_{q_{\phi}}\left\{ \mathbb{E}_{q_{\phi}\left(\theta\right)}\left[\sum_{i=1}^{n}\log p^{SA}\left(y_{i}\mid x_{i},\mathcal{S},\theta\right)\right]-KL\left(q_{\phi},p\right)\right\},
\]
which can be equivalently rewritten as
\begin{align}
 & \min_{q_{\phi}}\left\{ \lambda\mathbb{E}_{q_{\phi}\left(\theta\right)}\left[\mathcal{L}_{\mathcal{S}}\left(s\left(\theta\right)\right)\right]+KL\left(q_{\phi},p\right)\right\} \text{or} \nonumber \\
 & \min_{q_{\phi}}\left\{ \lambda\mathbb{E}_{q_{\phi}\left(\theta\right)}\left[\max_{\theta':\norm{\theta'-\theta}\leq\rho}\mathcal{L}_{\mathcal{S}}\left(\theta'\right)\right]+KL\left(q_{\phi},p\right)\right\}. \label{eq:variational}
\end{align}

\textbf{Derivation for Variational Approach with A Gaussian Approximate Posterior:} Inspired by the geometry-based SAM approaches ~\cite{kwon2021asam, kim22f}, we incorporate
the geometry to the SA variational approach via the distance to define
the ball for the sharpness as $\norm{\theta'-\theta}_{\text{diag}(T_{\theta})}=\sqrt{\left(\theta'-\theta\right)^{T}\text{diag}(T_{\theta})^{-1}\left(\theta'-\theta\right)}$
as
\[
\min_{q_{\phi}}\left\{ \lambda\mathbb{E}_{q_{\phi}\left(\theta\right)}\left[\max_{\theta':\norm{\theta'-\theta}_{\text{diag}(T_{\theta})}\leq\rho}\mathcal{L}_{\mathcal{S}}\left(\theta'\right)\right]+KL\left(q_{\phi},p\right)\right\} .
\]

To further clarify, we consider our SA posterior distribution to Bayesian
NNs, wherein we impose the Gaussian distributions to its weight matrices
$W_{i}\sim\mathcal{N}\left(\mu_{i},\sigma_{i}^{2}\mathbb{I}\right),i=1,\dots,L$\footnote{We absorb the biases to the weight matrices.}.
The parameter $\phi$ consists of $\mu_{i},\sigma_{i},i=1,\dots,L$.
For $\theta=W_{1:L}\sim q_{\phi}$, using the reparameterization trick
$W_{i}=\mu_{i}+\text{diag}(\sigma_{i})\epsilon_{i},\epsilon_{i}\sim\mathcal{N}\left(0,\mathbb{I}\right)$
and by searching $\theta^{'}=W_{1:L}^{'}$ with $W_{i}^{'}=\mu_{i}^{'}+\text{diag}(\sigma_{i})\epsilon_{i},\epsilon_{i}\sim\mathcal{N}\left(0,\mathbb{I}\right)$,
the constraint $\norm{\theta-\theta'}_{\text{diag}(T_{\theta})}=\norm{\mu-\mu'}_{\text{diag}(T_{\theta})}$
with $\mu=\mu_{1:L}$ and $\mu^{'}=\mu_{1:L}^{'}$. Thus, the
OP in (\ref{eq:variational}) reads
\begin{equation}
\min_{\mu,\sigma}\left\{ \lambda\mathbb{E}_{\epsilon}\left[\max_{\norm{\mu^{'}-\mu}_{\text{diag}(T_{\mu,\sigma})}\leq\rho}\mathcal{L}_{\mathcal{S}}\left(\left[\mu_{i}^{'}+\text{diag}(\sigma_{i})\epsilon_{i}\right]_{i=1}^{L}\right)\right]\right\} ,\label{eq:variational_derive}
\end{equation}
where $\sigma=\sigma_{1:L}$, $\epsilon=\epsilon_{1:L}$, and we define
$\text{diag}(T_{\theta})=\text{diag}(T_{\mu,\sigma})$ in the distance of the geometry.

To solve the OP in (\ref{eq:variational_derive}), we sample $\epsilon=\epsilon_{1:L}$
from the standard Gaussian distributions, employ an one-step gradient
ascent to find $\mu^{'}$, and use the gradient at $\mu^{'}$ to update
$\mu$. Specifically, we find $\mu'$ \cite{convex_op} (Chapter 9) as
\[
\mu'= \mu + \rho\frac{\text{diag}(T_{\mu,\sigma})\nabla_{\mu}\mathcal{L}_{\mathcal{S}}\left(\left[\mu_{i}+\text{diag}(\sigma_{i})\epsilon_{i}\right]_{i=1}^{L}\right)}{\norm{\text{diag}(T_{\mu,\sigma})\nabla_{\mu}\mathcal{L}_{\mathcal{S}}\left(\left[\mu_{i}+\text{diag}(\sigma_{i})\epsilon_{i}\right]_{i=1}^{L}\right)}}.
\]

The diagnose of $\text{diag}(T_{\mu,\sigma})$ specifies the importance
level of the model weights, i.e., the weight with a higher importance
level is encouraged to have a higher sharpness via a smaller absolute
partial derivative of the loss w.r.t. this weight. We consider $\text{diag}(T_{\mu,\sigma})=\mathbb{I}$
(i.e., \textit{the standard SA BNN}) and $\text{diag}(T_{\mu,\sigma})=\text{diag}\left(\frac{\left|\mu\right|}{\sigma}\right)$
(i.e., \textit{the geometry SA BNN}). Here we note that $\frac{\bigcdot}{\bigcdot}$
represents the element-wise division.

Finally, the objective function in (\ref{eq:variational}) indicates that we aim to find an approximate posterior distribution that ensures any model sampled from it is aware of the sharpness, while also preferring simpler approximate posterior distributions. This preference can be estimated based on how we equip these distributions. With the Bayesian setting and variational inference formulation, our proposed sharpness-aware posterior can be integrated into MCMC-based and variational inference-based Bayesian Neural Networks. The supplementary material contains the details on how to derive variational approaches and incorporate the sharpness-awareness into the BNNs used in our experiments including BNNs with an approximate Gaussian distribution \cite{kingma2015variational}, BNNs with stochastic gradient Langevin dynamics
(SGLD) \cite{welling2011bayesian}, MC-Dropout \cite{gal2016-mcdropout}, Bayesian deep ensemble \cite{lakshminarayanan2017simple}, and SWAG \cite{Maddox2019-swag}.





\section{Experiments}
\label{sec:exp}
In this section, we conduct various experiments to demonstrate the effectiveness of the sharpness-aware approach on Bayesian Neural networks, including BNNs with an approximate Gaussian distribution \cite{kingma2015variational} (i.e., SGVB for model's reparameterization trick and SGVB-LRT for representation's reparameterization trick), BNNs with stochastic gradient Langevin dynamics
(SGLD) \cite{welling2011bayesian}, MC-Dropout \cite{gal2016-mcdropout}, Bayesian deep ensemble \cite{lakshminarayanan2017simple}, and SWAG \cite{Maddox2019-swag}.  The experiments are conducted on three benchmark datasets: CIFAR-10, CIFAR-100, and ImageNet ILSVRC-2012, and report accuracy, negative log-likelihood (NLL), and Expected Calibration Error (ECE) to estimate the calibration capability and uncertainty of our method against baselines. The details of the dataset and implementation are described in the supplementary material\footnote{The implementation is provided in \url{https://github.com/anh-ntv/flat_bnn.git}}.

\begin{table}[t!]
\centering
\caption{Classification score on CIFAR-100 dataset.Each experiment is repeated three times with different random seeds and reports the mean and standard deviation.}

\label{tab:ex-CIFAR-100}
\resizebox{1.\columnwidth}{!}{\begin{tabular}{lccc|ccc}
\midrule
  & \multicolumn{3}{c}{\textbf{PreResNet-164}}      & \multicolumn{3}{c}{\textbf{WideResNet28x10}} \\
Method  & ACC $\uparrow$   & NLL $\downarrow$  & ECE $\downarrow$  & ACC $\uparrow$   & NLL $\downarrow$  & ECE $\downarrow$  \\ \midrule \midrule

\multicolumn{2}{l}{\textbf{Variational inference}}  &  &  &  &  &  \\
MC-Dropout    &  79.50 $\pm$ 0.37  &  0.9162 $\pm$ 0.0103   & 0.0993 $\pm$ 0.0033 &  82.30 $\pm$ 0.19  &  0.6500 $\pm$ 0.0049 & 0.0574 $\pm$  0.0028 \\
F-MC-Dropout     &  \textbf{81.06 $\pm$ 0.44}  &   \textbf{0.7027 $\pm$ 0.0049}  &  \textbf{0.0514 $\pm$ 0.0047}  &  \textbf{83.24 $\pm$ 0.11}  & \textbf{0.6144 $\pm$ 0.0068 } &  \textbf{0.0250 $\pm$ 0.0027} \\ \cmidrule{1-7}
Deep-ens    & 82.08 $\pm$ 0.42 & 0.7189  $\pm$ 0.0108 & 0.0334  $\pm$ 0.0064  &  83.04 $\pm$ 0.15 & 0.6958 $\pm$ 0.0335 & 0.0483 $\pm$ 0.0017    \\
F-Deep-ens      & \textbf{82.54 $\pm$ 0.10 } & \textbf{0.6286 $\pm$ 0.0022} & \textbf{0.0143 $\pm$ 0.0041}  &  \textbf{84.52 $\pm$ 0.03} & \textbf{0.5644 $\pm$ 0.0106} & \textbf{0.0191 $\pm$ 0.0039}  \\ \midrule \midrule

\multicolumn{2}{l}{\textbf{Markov chain Monte Carlo}}  &  &  &  &  & \\
SGLD   &  80.13 $\pm$ 0.01  &  0.7604 $\pm$  0.0010   &  0.1161 $\pm$ 0.0031  & 81.38 $\pm$ 0.10 &  0.7123 $\pm$ 0.0204 &   0.0958 $\pm$ 0.0004  \\
F-SGLD     &  \textbf{80.82 $\pm$ 0.02}  & \textbf{ 0.7276 $\pm$  0.0012}  & \textbf{ 0.1085 $\pm$ 0.0008}  &  \textbf{82.12 $\pm$ 0.16}  & \textbf{0.6722 $\pm$ 0.0112}  &   \textbf{0.0820 $\pm$ 0.0021}   \\ \midrule \midrule

\multicolumn{2}{l}{\textbf{Sample}}  &  &  &  &  & \\
SWAG-Diag    &  80.18 $\pm$ 0.50  &  0.6837 $\pm$   0.0186   & \textbf{ 0.0239 $\pm$ 0.0047 } &  82.40 $\pm$ 0.09  & 0.6150 $\pm$  0.0029  & 0.0322 $\pm$ 0.0018 \\
F-SWAG-Diag   & \textbf{  81.01 $\pm$	0.29 } & \textbf{  0.6645 $\pm$	0.0050  } & 0.0242 $\pm$	0.0039 & \textbf{   83.50 $\pm$ 0.29} & \textbf{ 0.5763 $\pm$	0.0120 } & \textbf{   0.0151 $\pm$	0.0020}  \\ \midrule
SWAG  &  79.90 $\pm$ 0.50  &  \textbf{0.6595 $\pm$ 0.0019 }  &  0.0587 $\pm$ 0.0048  &  82.23 $\pm$ 0.19  & 0.6078 $\pm$ 0.0006  &  \textbf{0.0113 $\pm$ 0.0020} \\
F-SWAG   & \textbf{  80.93 $\pm$ 0.27} &  0.6704 $\pm$	0.0049 & \textbf{  0.0350 $\pm$	0.0025 } & \textbf{  83.57 $\pm$ 0.26 } & \textbf{ 0.5757 $\pm$ 0.0136 } & 0.0196 $\pm$ 0.0015 \\ \cline{1-7}

\bottomrule
\end{tabular}}

\end{table}

\begin{table*}
\centering
\caption{Classification score on CIFAR-10 dataset.Each experiment is repeated three times with different random seeds and reports the mean and standard deviation.}

\label{tab:ex-CIFAR-10}
\resizebox{1\columnwidth}{!}{\begin{tabular}{lccc|ccc}
\midrule
  & \multicolumn{3}{c}{\textbf{PreResNet-164}}      & \multicolumn{3}{c}{\textbf{WideResNet28x10}} \\
Method  & ACC $\uparrow$   & NLL $\downarrow$  & ECE $\downarrow$  & ACC $\uparrow$   & NLL $\downarrow$  & ECE $\downarrow$  \\ \midrule \midrule

\multicolumn{2}{l}{\textbf{Variational inference}}   &  &  &  &  &  \\
MC-Dropout    & 96.18 $\pm$ 0.02 & 0.1270 $\pm$ 0.0030 &  0.0162 $\pm$ 0.0007       & 96.39 $\pm$ 0.09 & 0.1094 $\pm$ 0.0021 &  \textbf{0.0094 $\pm$ 0.0014}  \\
F-MC-Dropout     & \textbf{96.39 $\pm$ 0.18} & \textbf{0.1137 $\pm$ 0.0024}          & \textbf{0.0118 $\pm$ 0.0006}       &\textbf{ 97.10 $\pm$ 0.12} & \textbf{0.0966 $\pm$ 0.0047} &  0.0095 $\pm$ 0.0008  \\ \midrule
Deep-ens    & 96.39 $\pm$ 0.09 & 0.1277 $\pm$ 0.0030 & 0.0108 $\pm$ 0.0015 &  96.96 $\pm$ 0.10 & 0.1031 $\pm$ 0.0076 & 0.0087 $\pm$ 0.0018  \\
F-Deep-ens      & \textbf{96.70  $\pm$ 0.04} & \textbf{0.1031 $\pm$ 0.0016} & \textbf{0.0057 $\pm$ 0.0031}  &  \textbf{97.11 $\pm$ 0.10} & \textbf{0.0851 $\pm$ 0.0011} & \textbf{0.0059 $\pm$ 0.0012} \\ \midrule \midrule

\multicolumn{2}{l}{\textbf{Markov chain Monte Carlo}}  &  &  &  &  &  \\
SGLD   & 94.79 $\pm$  0.10 &  0.2089 $\pm$ 0.0021 &  0.0711 $\pm$ 0.0061 &  95.87 $\pm$  0.08 & 0.1573 $\pm$  0.0190 &   0.0463 $\pm$  0.0050 \\
F-SGLD     & \textbf{95.04 $\pm$ 0.06} &  \textbf{0.1912 $\pm$ 0.0080} &  \textbf{0.0601 $\pm$ 0.0002}  &  \textbf{96.43 $\pm$ 0.05}  & \textbf{0.1336 $\pm$ 0.004}  &    \textbf{0.0385 $\pm$ 0.0003}  \\ \midrule \midrule

\multicolumn{2}{l}{\textbf{Sample}}   &  &  &  &  &  \\
SWAG-Diag    & 96.03 $\pm$ 0.10 & 0.1251 $\pm$ 0.0029          &  0.0082 $\pm$ 0.0008       & 96.41 $\pm$ 0.05 & 0.1077 $\pm$ 0.0009 &  0.0047 $\pm$ 0.0013  \\
F-SWAG-Diag   & \textbf{ 96.23 $\pm$ 0.01}& \textbf{ 0.1108 $\pm$	0.0013 }& \textbf{  0.0043 $\pm$ 	0.0005 }& \textbf{ 97.05 $\pm$	0.08 }& \textbf{ 0.0888 $\pm$ 0.0052   }& \textbf{  0.0043 $\pm$ 0.0004}     \\ \midrule
 SWAG   & 96.03 $\pm$ 0.02 & 0.1232 $\pm$ 0.0022          &  \textbf{0.0053 $\pm$ 0.0004 }      & 96.32 $\pm$ 0.08 & 0.1122 $\pm$ 0.0009 & 0.0088 $\pm$ 0.0006   \\
F-SWAG     & \textbf{ 96.25 $\pm$	0.03 } & \textbf{ 0.11062 $\pm$ 0.0014  } & 0.0056 $\pm$ 0.0002  & \textbf{ 97.09 $\pm$ 0.14 } & \textbf{ 0.0883 $\pm$ 0.0004 } & \textbf{  0.0036 $\pm$ 0.0008}  \\ 
\bottomrule
\end{tabular}}

\end{table*}

\begin{table*}[t!]
\centering
\caption{Classification scores of approximate the Gaussian posterior on the CIFAR datasets. Each experiment is repeated three times with different random seeds and reports the mean and standard deviation.}
\label{tab:ex-sgvb}

\resizebox{1.\columnwidth}{!}{
\begin{tabular}{lccc|ccc}
\midrule

 & \multicolumn{3}{c}{\textbf{Resnet10}}      & \multicolumn{3}{c}{\textbf{Resnet18}} \\
Method  & ACC $\uparrow$   & NLL $\downarrow$  & ECE $\downarrow$  & ACC $\uparrow$   & NLL $\downarrow$  & ECE $\downarrow$  \\ \midrule \midrule
\multicolumn{3}{l}{\textbf{Experiments on Cifar-100 dataset}} &  & &  \\
SGVB-LRT  &  61.75 $\pm$ 0.75 &  1.534 $\pm$ 0.03  &  0.0676 $\pm$ 0.01 &  68.95 $\pm$ 1.20 &  1.140 $\pm$ 0.21 &   0.063  $\pm$ 0.04  \\
F-SGVB-LRT   &  62.25 $\pm$ 0.57 &  1.4001 $\pm$ 0.04 &  0.0642 $\pm$ 0.01 &  70.00 $\pm$ 1.42  &  1.127  $\pm$ 0.25 &   \textbf{0.022 $\pm$ 0.05}       \\
\multicolumn{1}{r}{+ Geometry}   & \textbf{62.54  $\pm$ 0.67} & \textbf{1.3704  $\pm$  0.01} &	\textbf{0.0301 $\pm$ 0.03} &   \textbf{70.12  $\pm$ 1.02} &	\textbf{1.121 $\pm$ 0.23} &	0.036 $\pm$ 0.06  \\ \midrule
SGVB   & 54.40 $\pm$ 0.98 &  1.968 $\pm$ 0.05  & 0.214 $\pm$  0.00 &   60.91 $\pm$ 2.31 &  1.746 $\pm$ 0.15 &  0.246 $\pm$ 0.03    \\
F-SGVB       & 54.53 $\pm$ 0.33 & 1.967 $\pm$ 0.00 &	0.212 $\pm$ 0.00  &  61.54 $\pm$ 2.23 &	1.695 $\pm$ 0.15 & 0.242 $\pm$ 0.03 \\
\multicolumn{1}{r}{+ Geometry}   &\textbf{ 55.53 $\pm$ 0.65} &   \textbf{1.906 $\pm$ 0.02}   & \textbf{0.207 $\pm$  0.00} & \textbf{62.58 $\pm$ 0.53 }   & \textbf{1.612 $\pm$ 0.03} &    \textbf{0.224 $\pm$ 0.00}      \\  \midrule \midrule
\multicolumn{3}{l}{\textbf{Experiments on Cifar-10 dataset}}  &  & &  \\
SGVB-LRT   &  84.98 $\pm$ 1.87  &  0.422 $\pm$ 0.10   &  0.043 $\pm$ 0.04 &  89.10 $\pm$ 1.32  &  0.344 $\pm$ 0.02 &   0.033 $\pm$ 0.02   \\
F-SGVB-LRT     &  86.32 $\pm$ 1.34 &  0.409  $\pm$ 0.03 &  \textbf{0.017 $\pm$ 0.06} &  90.00 $\pm$ 1.10  &  0.291 $\pm$ 0.02 &   0.019 $\pm$ 0.01   \\
\multicolumn{1}{r}{+ Geometry}   & \textbf{86.44 $\pm$ 1.12} &	\textbf{0.403 $\pm$ 0.06} & 0.025 $\pm$  0.03 &   \textbf{90.31 $\pm$ 1.11} &	\textbf{0.262 $\pm$ 0.01} &	\textbf{0.014 $\pm$ 0.02}    \\  \midrule
SGVB   & 80.52  $\pm$  2.10 &  0.781  $\pm$ 0.23   & 0.237  $\pm$ 0.06 &   86.74 $\pm$ 1.25    & 0.541 $\pm$ 0.01 &  0.181 $\pm$ 0.02    \\
F-SGVB   &  80.60  $\pm$  1.88  &	0.776  $\pm$  0.13	& 0.223  $\pm$  0.05  &  \textbf{87.01 $\pm$ 0.91} &	0.534 $\pm$ 0.01	& 0.183  $\pm$  0.01 \\
\multicolumn{1}{r}{+ Geometry}  &  \textbf{82.05  $\pm$  0.47}    &	\textbf{0.704  $\pm$  0.01} & \textbf{0.206  $\pm$ 0.00}   & 86.80 $\pm$ 1.30 & \textbf{0.531 $\pm$ 0.01} & \textbf{0.175 $\pm$ 0.01}  \\
\bottomrule
\end{tabular}
}

\end{table*}

\begin{table}[t!]
\centering
\caption{Classification score on ImageNet dataset}
\label{tab:ex-imgagenet}
\resizebox{.65\columnwidth}{!}{
\begin{tabular}{lccc|ccc}
\midrule

  & \multicolumn{3}{c}{\textbf{Densenet-161}}      & \multicolumn{3}{c}{\textbf{ResNet-152}} \\
Model     & ACC $\uparrow$   & NLL $\downarrow$  & ECE $\downarrow$  & ACC $\uparrow$   & NLL $\downarrow$      & ECE $\downarrow$  \\ \hline \midrule
 SWAG-Diag   & 78.59  &     0.8559 & 0.0459    & 78.96  & 0.8584 & 0.0566  \\
F-SWAG-Diag   & \textbf{ 78.71 } & \textbf{0.8267 } & \textbf{0.0194} & \textbf{ 79.20 } & \textbf{ 0.8065} & \textbf{ 0.0199}   \\  \cmidrule{1-7}
 SWAG   & 78.59  & 0.8303& 0.0204&  79.08  & 0.8205&  0.0279\\
F-SWAG    & \textbf{ 78.70 } & \textbf{ 0.8262} & \textbf{ 0.0185} & \textbf{ 79.17 } & \textbf{ 0.8078} & \textbf{ 0.0208} \\ \midrule
 SGLD   & 78.50  & 0.8317 & \textbf{0.0157} &  79.00  & 0.8165 & 0.0220 \\
F-SGLD    & \textbf{78.64} & \textbf{ 0.8236} & 0.0166 & \textbf{ 79.16 } & \textbf{ 0.8050} & \textbf{ 0.0167} \\ \bottomrule
\end{tabular}}

\end{table}

\begin{figure}[b!]
    
     \centering
     \begin{subfigure}
         \centering
         \includegraphics[width=0.24\textwidth]{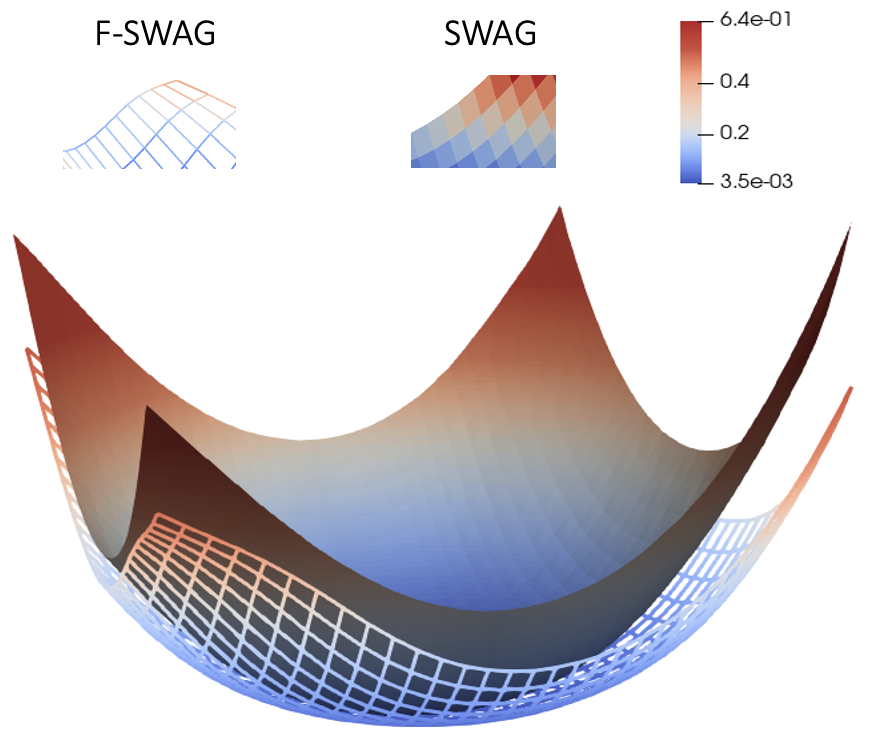}
     \end{subfigure}
     \hfill
     \begin{subfigure} 
         \centering
         \includegraphics[width=0.24\textwidth]{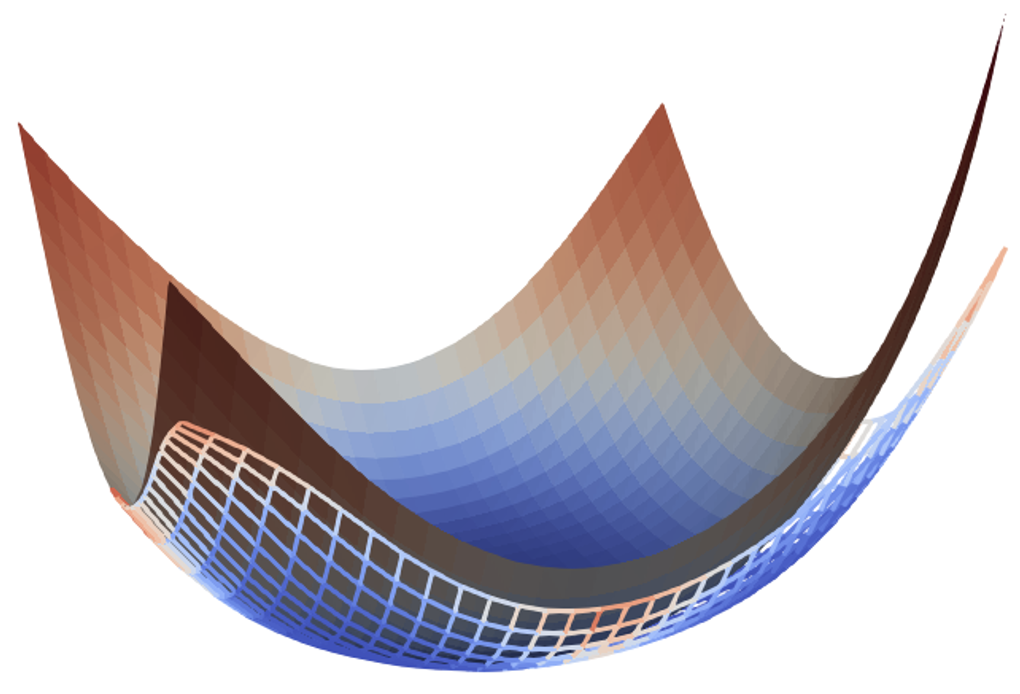}
     \end{subfigure}
     \hfill
     \begin{subfigure} 
         \centering
         \includegraphics[width=0.24\textwidth]{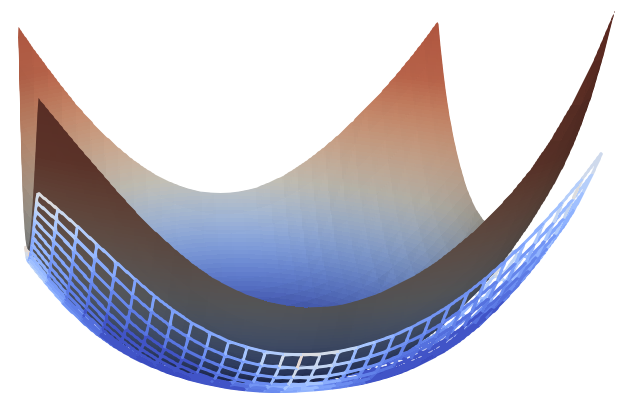}
     \end{subfigure}
     \hfill
     \begin{subfigure} 
         \centering
         \includegraphics[width=0.24\textwidth]{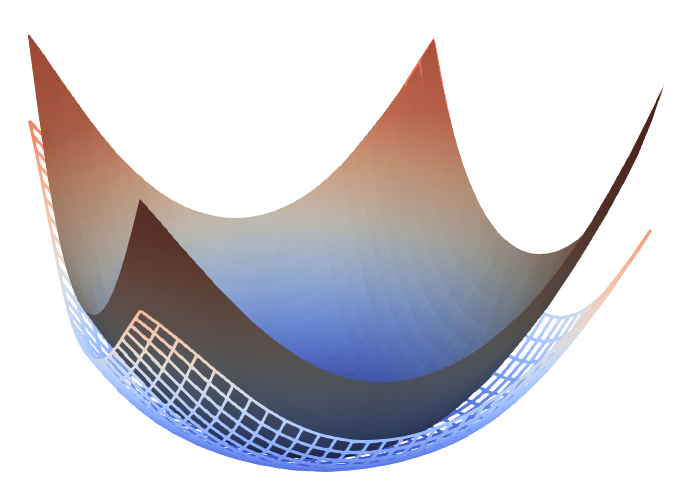}
     \end{subfigure}
        \caption{Comparing loss landscape of PreResNet-164 on CIFAR-100 dataset training with SWAG and F-SWAG method. For visualization purposes, we sample two models for each SWAG and F-SWAG and then plot the loss landscapes. It can be observed that the loss landscapes of our F-SWAG are flatter, supporting our argument for the flatter sampled models.}
        \label{fig:loss-landscape}
     
\end{figure}
     
     
     

\subsection{Experimental results} \label{subsec: exp-result}
\subsubsection{Predictive performance}


Our experimental results, presented in Tables \ref{tab:ex-CIFAR-100}, \ref{tab:ex-CIFAR-10}, \ref{tab:ex-sgvb} for CIFAR-100 and CIFAR-10 dataset, and Table \ref{tab:ex-imgagenet} for the ImageNet dataset, indicate a notable improvement across all experiments. 
It is worth noting that there is a trade-off between accuracy, negative log-likelihood, and expected calibration error. Nonetheless, our approach obtains a fine balance between these factors compared to the overall improvement.

\begin{figure}[h!]
     \centering
     \begin{subfigure}
         \centering
         \includegraphics[width=0.32\textwidth]{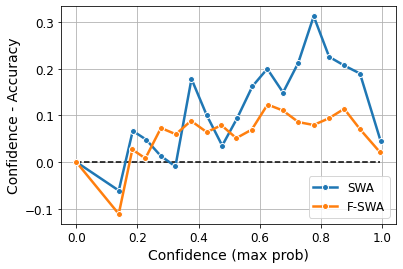}
     \end{subfigure}
     \begin{subfigure}
         \centering
         \includegraphics[width=0.32\textwidth]{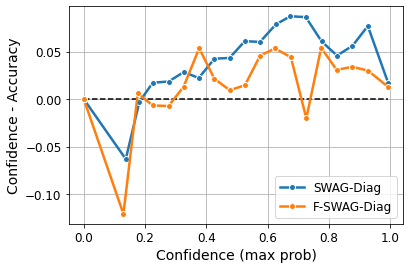}
     \end{subfigure}
     \begin{subfigure}
         \centering
         \includegraphics[width=0.32\textwidth]{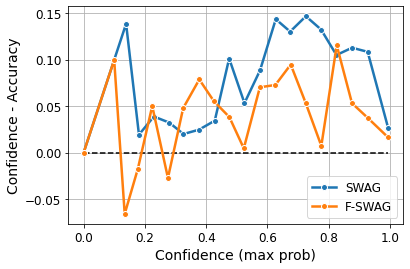}
     \end{subfigure}
     \centering
        \caption{Reliability diagrams for PreResNet164 on CIFAR-100. The confidence is split into 20 bins and plots the gap between confidence and accuracy in each bin. The best case is the black dashed line when this gap is zeros. The plots of F-SWAG get closer to the zero lines, implying our F-SWAG can calibrate the uncertainty better.}
         \label{fig:ece_bin}
\end{figure}

\subsection{Effectiveness of sharpness-aware posterior}
\label{subsec:exp-sharpness}

\textbf{Calibration of uncertainty estimates:}
We evaluate the ECE of each setting and compare it to baselines in Tables \ref{tab:ex-CIFAR-100}, \ref{tab:ex-CIFAR-10}, and \ref{tab:ex-imgagenet}. This score measures the maximum discrepancy between the accuracy and confidence of the model. To further clarify it, we display the Reliability Diagrams of PreResNet-164 on CIFAR-100 to understand how well the model predicts according to the confidence threshold in Figure \ref{fig:ece_bin}. The experiments is detailed in the supplementary material.

\textbf{Out-of-distribution prediction:}
The effectiveness of the sharpness-aware Bayesian neural network (BNN) is demonstrated in the above experiments, particularly in comparison to non-flat methods. In this section, we extend the evaluation to an out-of-distribution setting. Specifically, we utilize the BNN models trained on the CIFAR-10 dataset to assess their performance on the CIFAR-10-C dataset.
This is an extension of the CIFAR-10 designed to evaluate the robustness of machine learning models against common corruptions and perturbations in the input data. The corruptions include various forms of noise, blur, weather conditions, and digital distortions. We conduct an ensemble of 30 models sampled from the flat-posterior distribution and compared them with non-flat ones. We present the average result of each corruption group and the average result on the whole dataset in Table \ref{tab:ex-acc-CIFAR-10c}, the detailed result of each corruption form is displayed in the supplementary material. Remarkably, the flat BNN models consistently surpass their non-flat counterparts with respect to average ECE and accuracy metrics. This finding is additional evidence of the generalization ability of the sharpness-aware posterior.

\begin{table}[t!]
\centering
\caption{Classification score on CIFAR-10-C on PreResNet-164 model when training with CIFAR-10. The full result on each type of corruption is displayed in the supplemetary material.}
\label{tab:ex-acc-CIFAR-10c}
\resizebox{1.\columnwidth}{!}{%
\begin{tabular}{lcc|cc||cc|cc}
\midrule
& \multicolumn{4}{c}{\textbf{ECE $\downarrow$}} &  \multicolumn{4}{c}{\textbf{Accuracy $\uparrow$}} \\
Corruption &  SWAG-D & F-SWAG-D & SWAG & F-SWAG & 		SWAG-D & F-SWAG-D & SWAG & F-SWAG \\ \hline \midrule
Noise &  0.0729 & 0.0701 & 0.0958 & 0.0078 & 74.26 & 75.59 & 74.02 & 75.08 \\
Blur &  0.0121 & 0.0090 & 0.0202 & 0.0273 & 91.13 & 90.55 & 91.03 & 90.93 \\
Weather &  0.018 & 0.0142 & 0.0272 & 0.0240 & 89.47 & 89.18 & 89.42 & 89.11 \\
Digital and others &  0.0277 & 0.0229 & 0.0384 & 0.0209 & 87.03 & 86.94 & 86.93 & 87.19 \\
\midrule
Average &  0.0328 & \textbf{0.0290} & 0.0454 & \textbf{0.0200} & 85.47 & \textbf{85.56} & 85.35 & \textbf{85.58} \\ \bottomrule
\end{tabular}
}
\end{table}

\subsection{Ablation studies}

\label{subsec:losslandscape}

In Figure \ref{fig:loss-landscape}, we plot the loss-landscape of the models sampled from our proposal of sharpness-aware posterior against the non-sharpness-aware one. Particularly, we compare two methods F-SWAG and SWAG by selecting four random models sampled from the posterior distribution of each method under the same hyper-parameter settings. As observed, our method not only improves the generalization of ensemble inference, demonstrated by classification results in Section \ref{subsec: exp-result} and sharpness in Section \ref{subsec:exp-sharpness}, but also the individual sampled model is flatter itself.



We measure and visualize the sharpness of the models. To this end, we sample five models from the approximate posteriors and then take the average of the sharpness of these models. 
For a model $\theta$, the sharpness is evaluated as $\displaystyle\max_{||\epsilon||_2 \leq \rho}{\mathcal{L}_{\mathcal{S}}(\theta + \epsilon) - \mathcal{L}_{\mathcal{S}}(\theta)} $ to measure the change of loss value around $\theta$. We calculate the sharpness score of PreResNet-164 network for SWAG, and F-SWAG training on CIFAR-100 dataset and visualize them in the supplementary material. As shown there, the sharpness-aware versions produce smaller \textit{sharpness} scores compared to the corresponding baselines, indicating that our models get into flatter regions.


\section{Conclusion}
\label{sec:conslusion}
In this paper, we introduce theories in the Bayesian setting and discuss variational inference for the sharpness-aware posterior in the context of Bayesian Neural Networks (BNNs). The sharpness-aware posterior results in models that are less sensitive to noise and have a better generalization ability, as it enables the models sampled from it and the optimal approximate posterior estimates to have a higher flatness. We conducted extensive experiments that leveraged the sharpness-aware posterior with state-of-the-art Bayesian Neural Networks. Our main results show that the models sampled from the proposed posterior outperform their baselines in terms of ensemble accuracy, expected calibration error (ECE), and negative log-likelihood (NLL). This indicates that the flat-seeking counterparts are better at capturing the true distribution of weights in neural networks and providing accurate probabilistic predictions. Furthermore, we performed ablation studies to showcase the effectiveness of the flat posterior distribution on various factors such as uncertainty estimation, loss landscape, and out-of-distribution prediction. Overall, the sharpness-aware posterior presents a promising approach for improving the generalization performance of Bayesian neural networks.

\paragraph{Acknowledgements.} This work was partly supported by ARC DP23 grant DP230101176 and by the Air Force Office of Scientific Research under award number FA2386-23-1-4044.


\clearpage
\medskip
\bibliography{reference.bib}

\newpage
\appendix
\section{Theoretical Development}
\subsection{All Proofs}
\label{proofs}
\begin{theorem}
\label{thm:app_Gibbs_posterior_1}(Theorem 3.1 in the main paper) Consider the following optimization problem
\begin{equation}
\min_{\mathbb{Q<<\mathbb{P}}}\left\{ \lambda\mathcal{L}_{S}\left(\mathbb{Q}\right)+KL\left(\mathbb{Q},\mathbb{P}\right)\right\} ,\label{op:empirical}
\end{equation}
where we search over $\mathbb{Q}$ absolutely continuous w.r.t. $\mathbb{P}$
and $KL\left(\cdot,\cdot\right)$ is the Kullback-Leibler divergence.
This optimization has a closed-form optimal solution $\mathbb{Q}^{*}$
with the density
\[
q^{*}\left(\theta\right)\propto\exp\left\{ -\lambda\mathcal{L}_{\mathcal{S}}\left(\theta\right)\right\} p(\theta),
\]
which is exactly the standard posterior $\mathbb{Q}_{\mathcal{S}}$
with the density $q(\theta\mid\mathcal{S})$.
\end{theorem}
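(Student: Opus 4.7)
The plan is to prove this via the standard Gibbs variational identity, which rewrites the objective as a KL divergence to a target distribution (up to an additive constant), so that non-negativity of KL immediately gives both the optimizer and the minimum value.

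First I would define the candidate optimum explicitly. Let $Z := \int_{\Theta} \exp\{-\lambda \mathcal{L}_{\mathcal{S}}(\theta)\} p(\theta)\, d\theta$ and set $q^{*}(\theta) := Z^{-1}\exp\{-\lambda \mathcal{L}_{\mathcal{S}}(\theta)\} p(\theta)$. Under the mild integrability assumption that $Z \in (0,\infty)$ (guaranteed, e.g., if $\ell$ is bounded below or if $p$ has finite mass and $\mathcal{L}_{\mathcal{S}}$ is bounded on the support of $\mathbb{P}$), $q^{*}$ is a bona fide density absolutely continuous with respect to $\mathbb{P}$. Taking logs gives the key algebraic identity
\[
-\lambda \mathcal{L}_{\mathcal{S}}(\theta) = \log q^{*}(\theta) - \log p(\theta) + \log Z.
\]

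Next, for an arbitrary $\mathbb{Q}\ll \mathbb{P}$ with density $q$, I would plug this identity into the objective:
\begin{align*}
\lambda \mathcal{L}_{\mathcal{S}}(\mathbb{Q}) + KL(\mathbb{Q},\mathbb{P})
&= \int_{\Theta} q(\theta)\,\lambda \mathcal{L}_{\mathcal{S}}(\theta)\, d\theta + \int_{\Theta} q(\theta) \log \frac{q(\theta)}{p(\theta)}\, d\theta \\
&= \int_{\Theta} q(\theta)\left[\log p(\theta) - \log q^{*}(\theta) - \log Z\right] d\theta + \int_{\Theta} q(\theta) \log \frac{q(\theta)}{p(\theta)}\, d\theta \\
&= \int_{\Theta} q(\theta) \log \frac{q(\theta)}{q^{*}(\theta)}\, d\theta - \log Z \\
&= KL(\mathbb{Q},\mathbb{Q}^{*}) - \log Z.
\end{align*}
Since $\log Z$ does not depend on $\mathbb{Q}$, minimizing the left-hand side over $\mathbb{Q}\ll \mathbb{P}$ is equivalent to minimizing $KL(\mathbb{Q},\mathbb{Q}^{*})$. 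By Gibbs' inequality, the latter is non-negative and vanishes if and only if $\mathbb{Q} = \mathbb{Q}^{*}$, so $\mathbb{Q}^{*}$ is the unique minimizer and the optimal value is $-\log Z$.

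Finally, I would observe that $\mathbb{Q}^{*}$ coincides with the standard posterior $\mathbb{Q}_{\mathcal{S}}$ defined in Eq.~\eqref{eq:posterior_loss}, because both are proportional to $\exp\{-\lambda \mathcal{L}_{\mathcal{S}}(\theta)\} p(\theta)$ and both integrate to one. The only place where some care is needed is the algebraic step where $\log q^{*}$ and $\log p$ are added and subtracted: this manipulation is valid on the support of $\mathbb{Q}$ because $\mathbb{Q}\ll\mathbb{P}$ forces $q = 0$ wherever $p = 0$, and consequently also wherever $q^{*} = 0$, so the ratio $q/q^{*}$ inside the logarithm is well-defined $\mathbb{Q}$-a.s.\ (with the standard convention $0\log 0 = 0$). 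This is the main, and in fact only, technical subtlety; everything else is routine.
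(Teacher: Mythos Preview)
Your proof is correct and is in fact tighter than the paper's own argument, but it proceeds by a different route. The paper forms the Lagrangian with a multiplier enforcing $\int q=1$, takes the functional derivative in $q(\theta)$, sets it to zero, and reads off $q(\theta)\propto\exp\{-\lambda\mathcal{L}_{\mathcal{S}}(\theta)\}p(\theta)$. Your argument instead uses the Gibbs variational identity: you define $q^{*}$ up front, rewrite the objective as $KL(\mathbb{Q},\mathbb{Q}^{*})-\log Z$, and conclude by non-negativity of KL. The advantage of your approach is that it delivers not only the minimizer but also its uniqueness and the optimal value $-\log Z$, and it avoids the gap in the Lagrangian argument of having to check that the stationary point is actually a global minimum (the paper does not verify second-order or convexity conditions). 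The Lagrangian route is shorter to write but is formally only a first-order necessary condition; yours is a complete optimality proof.
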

\begin{proof}
We have
\[
\lambda\mathcal{L}_{S}\left(\mathbb{Q}\right)+KL\left(\mathbb{Q},\mathbb{P}\right)=\lambda\int\mathcal{L}_{S}\left(\theta\right)q\left(\theta\right)d\theta+\int q(\theta)\log\frac{q\left(\theta\right)}{p\left(\theta\right)}d\theta.
\]

The Lagrange function is as follows
\[
L\left(q,\alpha\right)=\lambda\int\mathcal{L}_{S}\left(\theta\right)q\left(\theta\right)d\theta+\int q(\theta)\log\frac{q\left(\theta\right)}{p\left(\theta\right)}d\theta+\alpha\left(\int q(\theta)d\theta-1\right).
\]

Take derivative w.r.t. $q\left(\theta\right)$ and set it to $0$, we
obtain
\[
\lambda\mathcal{L}_{S}\left(\theta\right)+\log q\left(\theta\right)+1-\log p\left(\theta\right)+\alpha=0.
\]
\[
q\left(\theta\right)=\exp\left\{ -\lambda\mathcal{L}_{S}\left(\theta\right)\right\} p\left(\theta\right)\exp\left\{ -\alpha-1\right\} .
\]
\[
q\left(\theta\right)\propto\exp\left\{ -\lambda\mathcal{L}_{S}\left(\theta\right)\right\} p\left(\theta\right).
\]
\end{proof}

\begin{lemma}
\label{lem:modulus}Assume that the data space $\mathcal{X}$, the
label space $\mathcal{Y}$, and the model space $\Theta$ are compact
sets. There exist the modulus of continuity $\omega:\mathbb{R}^{+}\goto\mathbb{R}^{+}$
with $\lim_{t\goto0^{+}}\omega\left(t\right)=0$ such that $\left|\ell\left(f_{\theta}\left(x\right),y\right)-\ell\left(f_{\theta'}\left(x\right),y\right)\right|\leq\omega\left(\norm{\theta-\theta'}\right),\forall x\in\mathcal{X},y\in\mathcal{Y}$.
\end{lemma}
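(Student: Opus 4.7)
The plan is to exploit compactness of the full product space $\Theta\times\mathcal{X}\times\mathcal{Y}$ and reduce the statement to the Heine--Cantor uniform continuity theorem. First I would unpack the implicit ``mild conditions'': these should include joint continuity of $(\theta,x)\mapsto f_{\theta}(x)$ and continuity of $\ell$ in both arguments, so that the composition $g(\theta,x,y) := \ell(f_{\theta}(x),y)$ is continuous on $\Theta\times\mathcal{X}\times\mathcal{Y}$. This is the natural hypothesis, and for any standard neural network architecture with continuous activations it holds automatically.

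Next, since $g$ is continuous on a compact metric space, the Heine--Cantor theorem gives that $g$ is uniformly continuous: for every $\varepsilon>0$ there exists $\delta>0$ such that
\[
\|(\theta,x,y)-(\theta',x',y')\|<\delta \;\Longrightarrow\; |g(\theta,x,y)-g(\theta',x',y')|<\varepsilon.
\]
I would then define the candidate modulus explicitly as
\[
\omega(t) := \sup\bigl\{\,|g(\theta,x,y)-g(\theta',x,y)| : \theta,\theta'\in\Theta,\ \|\theta-\theta'\|\leq t,\ x\in\mathcal{X},\ y\in\mathcal{Y}\,\bigr\}.
\]
By construction $\omega$ is non-negative, non-decreasing in $t$, and satisfies the pointwise bound $|\ell(f_{\theta}(x),y)-\ell(f_{\theta'}(x),y)|\leq\omega(\|\theta-\theta'\|)$ for all $x,y$.

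The only remaining and slightly subtle point is that $\omega(t)\to0$ as $t\to0^{+}$. Given $\varepsilon>0$, choose $\delta$ from uniform continuity of $g$. For any $t<\delta$ and any pair $(\theta,\theta')$ with $\|\theta-\theta'\|\leq t$, together with arbitrary $(x,y)$, the product-space distance between $(\theta,x,y)$ and $(\theta',x,y)$ is at most $t<\delta$, so $|g(\theta,x,y)-g(\theta',x,y)|<\varepsilon$. Taking the supremum over admissible $(\theta,\theta',x,y)$ yields $\omega(t)\leq\varepsilon$, which is the desired vanishing limit. The main obstacle is conceptually just this step: one must make sure the modulus is constructed to depend only on $\|\theta-\theta'\|$ yet remain valid uniformly over $(x,y)$, and this is exactly what compactness of $\mathcal{X}\times\mathcal{Y}$ (combined with Heine--Cantor on the full product) delivers.
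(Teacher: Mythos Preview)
Your proposal is correct and follows essentially the same approach as the paper: both use that a continuous function on the compact product $\Theta\times\mathcal{X}\times\mathcal{Y}$ is uniformly continuous (Heine--Cantor), then specialize to obtain a single modulus in $\|\theta-\theta'\|$ valid uniformly over $(x,y)$. Your version is in fact slightly more explicit than the paper's, since you write down $\omega(t)$ concretely and verify $\omega(t)\to0$, whereas the paper phrases the intermediate step as equi-continuity of the family $\{\ell(f_\theta(x),y):x,y\}$ in $\theta$ and then asserts the existence of the common modulus without constructing it.
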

\begin{proof}
The loss function $\ell\left(f_{\theta}\left(x\right),y\right)$ is
continuous on the compact set $\mathcal{X}\times\mathcal{Y}\times\Theta$,
hence it is equip-continuous on this set. For every $\epsilon>0$,
there exists $\delta_{x},\delta_{y},\delta_{\theta}>0$ such that
\[
\forall\norm{x'-x}\leq\delta_{x},\norm{y'-y}\leq\delta_{y},\norm{\theta'-\theta}\leq\delta_{\theta},
\]
we have $\left|\ell\left(f_{\theta'}\left(x'\right),y'\right)-\ell\left(f_{\theta'}\left(x\right),y\right)\right|\leq\epsilon$.

Therefore, for all $\norm{\theta'-\theta}\leq\delta_{\theta}$, we
have
\[
\left|\ell\left(f_{\theta}\left(x\right),y\right)-\ell\left(f_{\theta'}\left(x\right),y\right)\right|\leq\epsilon,\forall x,y.
\]

This means that the family $\{\ell\left(f_{\theta}\left(x\right),y\right):x\in\mathcal{X},y\in\mathcal{Y}\}$
is equi-continuous w.r.t. $\theta\in\Theta$. This means the existence
of the common modulus of continuity $\omega:\mathbb{R}^{+}\goto\mathbb{R}^{+}$
with $\lim_{t\goto0^{+}}\omega\left(t\right)=0.$
\end{proof}
\begin{definition}
Given $\epsilon>0$, we say that $\Theta$ is $\epsilon$-covered
by a set $\Theta'$ if for all $\theta\in\Theta$, there exists $\theta'\in\Theta'$
such that $\norm{\theta'-\theta}\leq\epsilon.$ We define $\mathcal{N}\left(\Theta,\epsilon\right)$
as the cardinality set of the smallest set $\Theta'$ that covers
$\Theta$. 
\end{definition}
\begin{lemma}
\label{lem:coverage_number}Let $R=\max_{\theta\in\Theta}\norm{\theta}^{2}<\infty$
and $k$ is the dimension of $\Theta$. We can upper-bound the coverage
number as

\[
\mathcal{N}\left(\Theta,\epsilon\right)\leq\left(\frac{2R\sqrt{k}}{\epsilon}\right)^{k}.
\]
\end{lemma}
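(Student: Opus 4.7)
The plan is to reduce the problem to the standard grid-based covering argument for a hypercube containing $\Theta$. First I would note that the bound $\max_{\theta\in\Theta}\norm{\theta}^{2}\le R$ implies $\norm{\theta}_\infty\le\norm{\theta}_2\le\sqrt{R}$ for every $\theta\in\Theta$, so $\Theta$ sits inside the axis-aligned cube $C:=[-\sqrt{R},\sqrt{R}]^{k}\subset\mathbb{R}^{k}$. Thus any $\epsilon$-cover of $C\cap\Theta$ (in $\ell_2$) yields an $\epsilon$-cover of $\Theta$, and it suffices to bound $\mathcal{N}(C,\epsilon)$.

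Next I would partition $C$ into a uniform grid of sub-cubes of side length $\delta:=\epsilon/\sqrt{k}$; the diameter of each sub-cube in the Euclidean norm is $\delta\sqrt{k}=\epsilon$, so any point in a sub-cube is within distance $\epsilon$ of the sub-cube's center. The number of such sub-cubes is at most $\lceil 2\sqrt{R}/\delta\rceil^{k}=\lceil 2\sqrt{Rk}/\epsilon\rceil^{k}$. For each sub-cube that meets $\Theta$, I pick an arbitrary representative $\theta\in\Theta$ lying in that sub-cube; every point of $\Theta$ is within distance $\epsilon$ of one of these representatives, producing an $\epsilon$-cover of $\Theta$ with cardinality at most the number of sub-cubes.

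Finally I would tidy the constants to match the claimed form. The natural estimate from the grid argument is $(2\sqrt{Rk}/\epsilon)^{k}$; the stated bound $(2R\sqrt{k}/\epsilon)^{k}$ then follows immediately in the regime $R\ge 1$ (which is the only interesting regime, since otherwise $\Theta\subset B(0,1)$ and a trivial cover suffices after a harmless enlargement of $R$ to $\max\{R,1\}$), using $\sqrt{R}\le R$. No step here is subtle; the main modest obstacle is just the bookkeeping of the factor $\sqrt{k}$ coming from the discrepancy between the $\ell_\infty$-diameter and $\ell_2$-diameter of a cube, and the implicit assumption on the norm convention used in the definition of $R$.
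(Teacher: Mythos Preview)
Your grid-covering argument is correct and is essentially the same approach the paper relies on: the paper does not give its own proof but simply cites Chapter~27 of Shalev-Shwartz and Ben-David, where precisely this cube-partition argument appears. You also correctly flagged the inconsistency in the definition of $R$; note that elsewhere in the appendix (Theorem~\ref{thm:appen_general_loss}) the paper uses $R=\max_{\theta\in\Theta}\norm{\theta}$ rather than $\norm{\theta}^{2}$, and with that convention the grid bound $(2R\sqrt{k}/\epsilon)^{k}$ falls out directly without your extra $\sqrt{R}\le R$ step.
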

\begin{proof}
The proof can be found in Chapter 27 of \cite{shalev2014understanding}.
\end{proof}
By choosing $\epsilon=\frac{1}{n^{\frac{1}{2k}}}$, we obtain
\[
\mathcal{N}\left(\Theta,n^{-\frac{1}{2k}}\right)\leq\left(2R\sqrt{k}\right)^{k}\sqrt{n}.
\]

However, solving the optimization problem (OP) for the general data-label distribution $\mathcal{D}$ is generally intractable. To make it tractable, we find its upper-bound
which is relevant to the sharpness as shown in the following theorem.
\begin{theorem}
    \label{thm:appen_general_loss}
    (Theorem 3.2 in the main paper) Assume that $\Theta$ is a compact set. Given
any $\delta\in[0;1]$, with the probability at least $1-\delta$ over
the choice of $\mathcal{S}\sim\mathcal{D}^{n}$, for any distribution
$\mathbb{Q}$, we have
\begin{align*}
\mathcal{L}_{\mathcal{D}}\left(\mathbb{Q}\right) & \leq\mathbb{E}_{\theta\sim\mathbb{Q}}\left[\max_{\theta':\norm{\theta'-\theta}\leq\rho}\mathcal{L}_{\mathcal{S}}\left(\theta'\right)\right]+\mathcal{L}_{\mathcal{S}}\left(\mathbb{Q}\right)+\frac{1}{\sqrt{n}}+2\omega\left(\frac{1}{n^{\frac{1}{2k}}}\right)\\
 & +\sqrt{\frac{k\left(1+\log\left(1+\frac{2R^{2}}{\rho^{2}}\left(1+2\log\left(2R\sqrt{k}\right)+\frac{2}{k}\log n\right)\right)\right)+2\log\frac{n}{\delta}}{4(n-1)}},
\end{align*}
where we assume that $\mathcal{L}_{\mathcal{D}}\left(\mathbb{Q}\right)=\mathbb{E}_{\theta\sim\mathbb{Q}}\left[\mathcal{L}_{\mathcal{D}}\left(\theta\right)\right]\leq\mathbb{E}_{\theta\sim\mathbb{Q}}\left[\mathbb{E}_{\epsilon\sim\mathcal{N}(0,\sigma\mathbb{I})}\left[\mathcal{L}_{\mathcal{D}}\left(\theta+\epsilon\right)\right]\right]$
with $\sigma=\frac{\rho}{k^{1/2}\left(1+\sqrt{\frac{\log\left(N^{2}n\right)}{k}}\right)}$
and $N=\mathcal{N}\left(\Theta,n^{-\frac{1}{2k}}\right)$, $k$ is
the number of parameters of the models, $n=\left|S\right|$, $R=\max_{\theta\in\Theta}\norm{\theta}$,
and $\omega:\mathbb{R}^{+}\goto\mathbb{R}^{+}$ is a function such
that $\lim_{t\goto0^{+}}\omega\left(t\right)=0$.
\end{theorem}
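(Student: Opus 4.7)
The plan is to extend the SAM-style PAC-Bayes argument of \cite{foret2021sharpnessaware} from a single weight vector $\theta$ to a distribution $\mathbb{Q}$ over $\Theta$, combining Gaussian smoothing of $\theta$, a finite cover of $\Theta$, and the equi-continuity lemma. First I would invoke the standing assumption
\[
\mathcal{L}_{\mathcal{D}}(\mathbb{Q})\leq\mathbb{E}_{\theta\sim\mathbb{Q}}\mathbb{E}_{\epsilon\sim\mathcal{N}(0,\sigma^{2}\mathbb{I})}\bigl[\mathcal{L}_{\mathcal{D}}(\theta+\epsilon)\bigr]
\]
to reduce the problem to controlling the inner smoothed general loss uniformly in $\theta\in\Theta$; integration against $\mathbb{Q}$ is deferred to the final step.

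For this uniform control, I would cover $\Theta$ by an $n^{-1/(2k)}$-net of cardinality $N\leq(2R\sqrt{k})^{k}\sqrt{n}$ by Lemma \ref{lem:coverage_number}, and then apply McAllester's PAC-Bayes inequality at each cover center $\theta_{i}$ with posterior $\mathcal{N}(\theta_{i},\sigma^{2}\mathbb{I})$ and prior $\mathcal{N}(0,\tau_{j}^{2}\mathbb{I})$ drawn from a logarithmic grid $\{\tau_{j}\}_{j}$ (the usual data-dependent-prior trick). A union bound over the $N\cdot O(\log n)$ (center, prior) pairs retains probability $\geq 1-\delta$ at a mere $\log(n/\delta)$ price, and for every $\theta_{i}$ some $\tau_{j}^{2}\approx\sigma^{2}+\norm{\theta_{i}}^{2}/k$ in the grid makes the Gaussian--Gaussian KL equal, up to constants, to $\tfrac{k}{2}\log\bigl(1+\tfrac{2R^{2}}{\rho^{2}}(1+2\log(2R\sqrt{k})+\tfrac{2}{k}\log n)\bigr)$, precisely matching the bracketed numerator in the target bound. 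To move from cover centers to arbitrary $\theta$, I would pick the closest $\theta_{i}$ (at distance $\leq n^{-1/(2k)}$) and apply Lemma \ref{lem:modulus} twice, once on the general-loss and once on the empirical-loss side, absorbing the discrepancies into the additive $2\omega(n^{-1/(2k)})$ term.

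Finally, I would trade the Gaussian expectation for the Euclidean-ball maximum. The prescribed $\sigma=\rho/\bigl(\sqrt{k}(1+\sqrt{\log(N^{2}n)/k})\bigr)$ is chosen exactly so that Gaussian concentration in $\mathbb{R}^{k}$ yields $\Pr(\norm{\epsilon}>\rho)\leq 1/(N^{2}n)$; splitting $\mathbb{E}_{\epsilon}[\mathcal{L}_{\mathcal{S}}(\theta+\epsilon)]$ on $\{\norm{\epsilon}\leq\rho\}$ against its complement and using boundedness of $\ell$ gives $\mathbb{E}_{\epsilon}[\mathcal{L}_{\mathcal{S}}(\theta+\epsilon)]\leq\max_{\norm{\theta'-\theta}\leq\rho}\mathcal{L}_{\mathcal{S}}(\theta')+1/(N^{2}n)$, which after summing across cover centers yields the $1/\sqrt{n}$ term. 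Integrating the resulting (now deterministic in $\theta$) inequality against $\mathbb{Q}$ by Fubini commutes the expectation with both the max-over-ball and the empirical anchor $\mathcal{L}_{\mathcal{S}}(\mathbb{Q})$ inherited from McAllester, completing the bound. I expect the main obstacle to be the self-consistent tuning of $\sigma$ and the grid $\{\tau_{j}\}_{j}$: $\sigma$ must be small enough for the Gaussian tail bound to cost only $O(1/\sqrt{n})$, yet the chosen $\tau_{j}^{2}$ must dominate $\sigma^{2}+R^{2}/k$, while the cover cardinality $N$ itself depends on $R,k,n$ through Lemma \ref{lem:coverage_number}, so the PAC-Bayes and concentration sub-arguments are tightly coupled.
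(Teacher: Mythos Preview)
Your proposal uses the same ingredients as the paper's proof (an $n^{-1/(2k)}$-net of $\Theta$, McAllester's PAC-Bayes bound with a logarithmic grid of Gaussian priors, the modulus-of-continuity Lemma~\ref{lem:modulus} to pass between $\theta$ and its nearest cover point, and chi-squared concentration to trade the Gaussian smoothing for a $\rho$-ball maximum), so the overall strategy is correct and essentially the same. The organization differs slightly: the paper first restricts to a discrete $\mathbb{Q}=\sum_i\pi_i\delta_{\theta_i}$, applies PAC-Bayes once to the \emph{mixture} posterior $\bar{\mathbb{Q}}=\sum_i\pi_i\mathcal{N}(\theta_i',\sigma\mathbb{I})$ at the associated cover points, bounds $KL(\bar{\mathbb{Q}},\bar{\mathbb{P}})\le\max_i KL(\mathcal{N}(\theta_i',\sigma\mathbb{I}),\bar{\mathbb{P}})$ by convexity, and passes to general $\mathbb{Q}$ by a weak-limit argument; you instead derive a bound pointwise in $\theta$ and integrate against $\mathbb{Q}$ at the end, which avoids the discretization step.

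There is one point to correct. You write that a union bound over the $N\cdot O(\log n)$ (center, prior) pairs costs ``a mere $\log(n/\delta)$''. If you literally union-bound over the $N$ cover centers this is false: $\log N\le k\log(2R\sqrt{k})+\tfrac12\log n$, so the penalty is of order $k$, not $\log n$, and the resulting bound would not match the numerator in the statement. The fix is that no union bound over centers is needed at all: McAllester's inequality, for a fixed prior, already holds \emph{simultaneously for all posteriors}, so a single application per prior $\tau_j$ covers every Gaussian $\mathcal{N}(\theta_i,\sigma^2\mathbb{I})$ at once. The only genuine union bound is over the prior grid, and for each cover center you then select the $\tau_j$ closest to $\sigma^2+\norm{\theta_i}^2/k$. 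This is also implicitly what the paper does (its prior $\bar{\mathbb{P}}$ is indexed by the integer $j$), and with that correction your route goes through. A smaller quibble: with the stated $\sigma$ the chi-squared tail gives $\Pr(\norm{\epsilon}>\rho)\le 1/(N\sqrt{n})$ rather than $1/(N^2n)$, but either suffices for the additive $1/\sqrt{n}$ term.
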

\begin{proof}
Given $\epsilon=\frac{1}{n^{\frac{1}{2k}}}$, we denote $\Theta'=\left\{ \theta_{1}^{'},\dots,\theta_{N}^{'}\right\} $
where $N=\mathcal{N}\left(\Theta,n^{-\frac{1}{2k}}\right)\leq\left(2R\sqrt{k}\right)^{k}\sqrt{n}$
as the $\epsilon$-covered set of $\Theta$. We first examine a discrete
distribution 
\[
\mathbb{Q}=\sum_{i=1}^{m}\pi_{i}\delta_{\theta_{i}}.
\]

Without lossing the generalization, we can assume that $\norm{\theta_{i}^{'}-\theta_{i}}\leq\epsilon,\forall i=1,\dots,m$.
We note that $\theta_{1}^{'},\dots.\theta_{m}^{'}$ can be repeated
if $m>N$. Using Lemma \ref{lem:modulus}, let $\omega(\cdot)$ be
the modulus of continuity of $\ell\left(f_{\theta}\left(x\right),y\right)$
such that $\left|\ell\left(f_{\theta}\left(x\right),y\right)-\ell\left(f_{\theta}\left(x\right),y\right)\right|\leq\omega\left(\norm{\theta-\theta'}\right),\forall x,y$
and $\lim_{t\goto0}\omega\left(t\right)=0$. This implies that
\[
\left|\ell\left(f_{\theta_{i}}\left(x\right),y\right)-\ell\left(f_{\theta_{i}^{'}}\left(x\right),y\right)\right|\leq\omega\left(\epsilon\right)=\omega\left(\frac{1}{n^{\frac{1}{2k}}}\right),\forall x,y,i=1,\dots,m.
\]

We consider the distribution $\bar{\mathbb{Q}}=\sum_{i=1}^{m}\pi_{i}\mathcal{N}\left(\theta_{i}^{'},\sigma\mathbb{I}\right)$.
According to the McAllester PAC-Bayes bound, with the probability
$1-\delta$ over the choices of $\mathcal{S}\sim\mathcal{D}^{n}$,
for any distribution $\bar{\mathbb{P}}$, we have 
\[
\mathcal{L}_{\mathcal{D}}\left(\bar{\mathbb{Q}}\right)\leq\mathcal{L}_{S}\left(\bar{\mathbb{Q}}\right)+\sqrt{\frac{KL\left(\bar{\mathbb{Q}},\bar{\mathbb{P}}\right)+\log\frac{n}{\delta}}{2(n-1)}.}
\]

Let $\theta^{*}=\argmax{1\leq i\leq m}\norm{\theta_{i}^{'}}$. We
consider the distribution $\bar{\mathbb{P}}=\mathcal{N}\left(0,\sigma_{\mathbb{P}}\right)$where
$\sigma_{\mathbb{P}}^{2}=c\exp\left\{ \frac{1-j}{k}\right\} $ with
$c=\sigma^{2}\left(1+\exp\left\{ \frac{4n}{k}\right\} \right)$ and
$j=\left\lfloor 1+k\log\frac{c}{\sigma^{2}+\frac{\norm{\theta^{*}}^{2}}{k}}\right\rfloor =\left\lfloor 1+k\log\frac{\sigma^{2}\left(1+\exp\left\{ \frac{4n}{k}\right\} \right)}{\sigma^{2}+\frac{\norm{\theta^{*}}^{2}}{k}}\right\rfloor $.
It follows that
\[
\sigma^{2}+\frac{\norm{\theta^{*}}^{2}}{k}\leq\sigma_{\mathbb{P}}\leq\exp\left\{ \frac{1}{k}\right\} \left(\sigma^{2}+\frac{\norm{\theta^{*}}^{2}}{k}\right).
\]

We have 
\[
KL\left(\mathcal{N}\left(\theta_{i}^{'},\sigma\mathbb{I}\right),\bar{\mathbb{P}}\right)=\frac{1}{2}\left[\frac{k\sigma^{2}+\norm{\theta_{i}^{'}}^{2}}{\sigma_{\mathbb{P}}^{2}}-k+k\log\left(\frac{\sigma_{\mathbb{P}}^{2}}{\sigma^{2}}\right)\right].
\]
\[
KL\left(\mathcal{N}\left(\theta^{*},\sigma\mathbb{I}\right),\bar{\mathbb{P}}\right)=\max_{i}KL\left(\mathcal{N}\left(\theta_{i}^{'},\sigma\mathbb{I}\right),\bar{\mathbb{P}}\right).
\]
\[
KL\left(\bar{\mathbb{Q}},\bar{\mathbb{P}}\right)\leq\sum_{i=1}^{m}\pi_{i}KL\left(\mathcal{N}\left(\theta_{i}^{'},\sigma\mathbb{I}\right),\bar{\mathbb{P}}\right)\leq KL\left(\mathcal{N}\left(\theta^{*},\sigma\mathbb{I}\right),\bar{\mathbb{P}}\right).
\]

We now bound $KL\left(\mathcal{N}\left(\theta^{*},\sigma\mathbb{I}\right),\bar{\mathbb{P}}\right)$
\begin{align*}
KL\left(\mathcal{N}\left(\theta^{*},\sigma\mathbb{I}\right),\bar{\mathbb{P}}\right) & =\frac{1}{2}\left[\frac{k\sigma^{2}+\norm{\theta^{*}}^{2}}{\sigma_{\mathbb{P}}^{2}}-k+k\log\left(\frac{\sigma_{\mathbb{P}}^{2}}{\sigma^{2}}\right)\right]\\
\leq & \frac{1}{2}\left[\frac{k\sigma^{2}+\norm{\theta^{*}}^{2}}{\sigma^{2}+\frac{\norm{\theta^{*}}^{2}}{k}}-k+k\log\left(\frac{\exp\left\{ \frac{1}{k}\right\} \left(\sigma^{2}+\frac{\norm{\theta^{*}}^{2}}{k}\right)}{\sigma^{2}}\right)\right]\\
\leq & \frac{k}{2}\left(1+\log\left(1+\frac{\norm{\theta^{*}}^{2}}{k\sigma^{2}}\right)\right).
\end{align*}

Therefore, with the probability $1-\delta$, we reach
\[
\mathcal{L}_{\mathcal{D}}\left(\bar{\mathbb{Q}}\right)\leq\mathcal{L}_{S}\left(\bar{\mathbb{Q}}\right)+\sqrt{\frac{k\left(1+\log\left(1+\frac{\norm{\theta^{*}}^{2}}{k\sigma^{2}}\right)\right)+2\log\frac{n}{\delta}}{4(n-1)}.}
\]
\begin{align*}
\mathbb{E}_{\theta\sim\sum_{i=1}^{m}\pi_{i}\mathcal{N}\left(\theta_{i}^{'},\sigma\mathbb{I}\right)}\left[\mathcal{L}_{\mathcal{D}}\left(\theta\right)\right] & \leq\mathbb{E}_{\theta\sim\sum_{i=1}^{m}\pi_{i}\mathcal{N}\left(\theta_{i}^{'},\sigma\mathbb{I}\right)}\left[\mathcal{L}_{\mathcal{S}}\left(\theta\right)\right]+\sqrt{\frac{k\left(1+\log\left(1+\frac{\norm{\theta^{*}}^{2}}{k\sigma^{2}}\right)\right)+2\log\frac{n}{\delta}}{4(n-1)}.}\\
\leq & \sum_{i=1}^{m}\pi_{i}\mathbb{E}_{\epsilon_{i}\sim\mathcal{N}\left(0,\sigma\mathbb{I}\right)}\left[\mathcal{L}_{\mathcal{S}}\left(\theta_{i}^{'}+\epsilon_{i}\right)\right]+\sqrt{\frac{k\left(1+\log\left(1+\frac{\norm{\theta^{*}}^{2}}{k\sigma^{2}}\right)\right)+2\log\frac{n}{\delta}}{4(n-1)}.}
\end{align*}

Note that 
\begin{align*}
\mathbb{E}_{\theta\sim\mathcal{N}\left(\theta_{i},\sigma\mathbb{I}\right)}\left[\mathcal{L}_{\mathcal{D}}\left(\theta\right)\right]-\mathbb{E}_{\theta\sim\mathcal{N}\left(\theta_{i}^{'},\sigma\mathbb{I}\right)}\left[\mathcal{L}_{\mathcal{D}}\left(\theta\right)\right] & =\int\left[\mathcal{L}_{\mathcal{D}}\left(\theta_{i}+\epsilon_{i}\right)-\mathcal{L}_{\mathcal{D}}\left(\theta_{i}^{'}+\epsilon_{i}\right)\right]\mathcal{N}\left(\epsilon_{i}\mid0,\sigma I\right)d\epsilon_{i}\\
\leq & \int\omega\left(\frac{1}{n^{\frac{1}{2k}}}\right)\mathcal{N}\left(\epsilon_{i}\mid0,\sigma I\right)d\epsilon_{i}=\omega\left(\frac{1}{n^{\frac{1}{2k}}}\right).
\end{align*}
\[
\mathbb{E}_{\theta\sim\mathcal{N}\left(\theta_{i},\sigma\mathbb{I}\right)}\left[\mathcal{L}_{\mathcal{D}}\left(\theta\right)\right]\leq\mathbb{E}_{\theta\sim\mathcal{N}\left(\theta_{i}^{'},\sigma\mathbb{I}\right)}\left[\mathcal{L}_{\mathcal{D}}\left(\theta\right)\right]+\omega\left(\frac{1}{n^{\frac{1}{2k}}}\right).
\]
\[
\sum_{i=1}^{m}\pi_{i}\mathbb{E}_{\theta\sim\mathcal{N}\left(\theta_{i},\sigma\mathbb{I}\right)}\left[\mathcal{L}_{\mathcal{D}}\left(\theta\right)\right]\leq\sum_{i=1}^{m}\pi_{i}\mathbb{E}_{\theta\sim\mathcal{N}\left(\theta_{i}^{'},\sigma\mathbb{I}\right)}\left[\mathcal{L}_{\mathcal{D}}\left(\theta\right)\right]+\omega\left(\frac{1}{n^{\frac{1}{2k}}}\right),
\]
therefore we have
\begin{align*}
\mathbb{E}_{\theta\sim\sum_{i=1}^{m}\pi_{i}\mathcal{N}\left(\theta_{i},\sigma\mathbb{I}\right)}\left[\mathcal{L}_{\mathcal{D}}\left(\theta\right)\right] & \leq\sum_{i=1}^{m}\pi_{i}\mathbb{E}_{\epsilon_{i}\sim\mathcal{N}\left(0,\sigma\mathbb{I}\right)}\left[\mathcal{L}_{\mathcal{S}}\left(\theta_{i}^{'}+\epsilon_{i}\right)\right]\\
 & +\sqrt{\frac{k\left(1+\log\left(1+\frac{\norm{\theta^{*}}^{2}}{k\sigma^{2}}\right)\right)+2\log\frac{n}{\delta}}{4(n-1)}}+\omega\left(\frac{1}{n^{\frac{1}{2k}}}\right).
\end{align*}

Using the assumption 
\[
\mathcal{L}_{\mathcal{D}}\left(\mathbb{Q}\right)=\mathbb{E}_{\theta\sim\mathbb{Q}}\left[\mathcal{L}_{\mathcal{D}}\left(\theta\right)\right]\leq\mathbb{E}_{\theta\sim\mathbb{Q}}\left[\mathbb{E}_{\epsilon\sim\mathcal{N}\left(0,\sigma\mathbb{I}\right)}\left[\mathcal{L}_{\mathcal{D}}\left(\theta+\epsilon\right)\right]\right]=\mathbb{E}_{\theta\sim\sum_{i=1}^{m}\pi_{i}\mathcal{N}\left(\theta_{i},\sigma\mathbb{I}\right)}\left[\mathcal{L}_{\mathcal{D}}\left(\theta\right)\right],
\]
we obtain
\begin{align*}
\mathcal{L}_{\mathcal{D}}\left(\mathbb{Q}\right) & \leq\sum_{i=1}^{m}\pi_{i}\mathbb{E}_{\epsilon_{i}\sim\mathcal{N}\left(0,\sigma\mathbb{I}\right)}\left[\mathcal{L}_{\mathcal{S}}\left(\theta_{i}+\epsilon_{i}\right)\right]\\
 & +\sqrt{\frac{k\left(1+\log\left(1+\frac{R^{2}}{k\sigma^{2}}\right)\right)+2\log\frac{n}{\delta}}{4(n-1)}}+\omega\left(\frac{1}{n^{\frac{1}{2k}}}\right).
\end{align*}

Because $\epsilon_{i}\sim\mathcal{N}\left(0,\sigma\mathbb{I}\right)$,
$\norm{\epsilon_{i}}^{2}$ follows the Chi-squared distribution. Therefore,
we have for any $i\in[m]$
\[
\mathbb{P}\left(\norm{\epsilon_{i}}^{2}-k\sigma^{2}\geq2\sigma^{2}\sqrt{kt}+2t\sigma^{2}\right)\leq\exp(-t),\forall t.
\]
\[
\mathbb{P}\left(\max_{i\in[m]}\norm{\epsilon_{i}}^{2}-k\sigma^{2}\geq2\sigma^{2}\sqrt{kt}+2t\sigma^{2}\right)\leq N\exp(-t),\forall t.,
\]
since the cardinality of $\left|\{\theta_{1}^{'},\dots.,\theta_{m}^{'}\}\right|$
cannot exceed $N$.
\[
\mathbb{P}\left(\max_{i\in[m]}\norm{\epsilon_{i}}^{2}-k\sigma^{2}<2\sigma^{2}\sqrt{kt}+2t\sigma^{2}\right)>1-N\exp(-t),\forall t.
\]

By choosing $t=\log\left(Nn^{1/2}\right),$ with the probability at
least $1-\frac{1}{\sqrt{n}}$, we have for all $i\in[m]$

\[
\norm{\epsilon_{i}}^{2}<\sigma^{2}k\left(1+\frac{\log\left(N^{2}n\right)}{k}+2\sqrt{\frac{\log\left(Nn^{1/2}\right)}{k}}\right)\leq\sigma^{2}k\left(1+\sqrt{\frac{\log\left(N^{2}n\right)}{k}}\right)^{2}.
\]

By choosing $\sigma=\frac{\rho}{k^{1/2}\left(1+\sqrt{\frac{\log\left(N^{2}n\right)}{k}}\right)}$,
with the probability at least $1-\frac{1}{\sqrt{n}}$, we have for
all $i\in[m]$
\[
\norm{\epsilon_{i}}<\rho.
\]

We now derive 
\begin{align*}
\mathcal{L}_{\mathcal{D}}\left(\mathbb{Q}\right) & \leq\sum_{i=1}^{m}\pi_{i}\left(1-\frac{1}{\sqrt{n}}\right)\max_{\norm{\epsilon_{i}}\leq\rho}\mathcal{L}_{\mathcal{S}}\left(\theta_{i}^{'}+\epsilon_{i}\right)\\
&+\frac{1}{\sqrt{n}}+\sqrt{\frac{k\left(1+\log\left(1+\frac{R^{2}}{k\sigma^{2}}\right)\right)+2\log\frac{n}{\delta}}{4(n-1)}}+\omega\left(\frac{1}{n^{\frac{1}{2k}}}\right)\\
 & \leq\left(1-\frac{1}{\sqrt{n}}\right)\sum_{i=1}^{m}\pi_{i}\max_{\norm{\epsilon_{i}}\leq\rho}\mathcal{L}_{\mathcal{S}}\left(\theta_{i}^{'}+\epsilon_{i}\right)+\frac{1}{\sqrt{n}}\\
 & +\sqrt{\frac{k\left(1+\log\left(1+\frac{R^{2}}{\rho^{2}}\left(1+\sqrt{\frac{\log\left(N^{2}n\right)}{k}}\right)^{2}\right)\right)+2\log\frac{n}{\delta}}{4(n-1)}}+\omega\left(\frac{1}{n^{\frac{1}{2k}}}\right)\\
 & \leq\sum_{i=1}^{m}\pi_{i}\max_{\norm{\epsilon_{i}}\leq\rho}\mathcal{L}_{\mathcal{S}}\left(\theta_{i}^{'}+\epsilon_{i}\right)+\frac{1}{\sqrt{n}}\\
 & +\sqrt{\frac{k\left(1+\log\left(1+\frac{2R^{2}}{\rho^{2}}\left(1+\frac{\log\left(N^{2}n\right)}{k}\right)\right)\right)+2\log\frac{n}{\delta}}{4(n-1)}}+\omega\left(\frac{1}{n^{\frac{1}{2k}}}\right)\\
 & \leq\sum_{i=1}^{m}\pi_{i}\max_{\norm{\epsilon_{i}}\leq\rho}\mathcal{L}_{\mathcal{S}}\left(\theta_{i}^{'}+\epsilon_{i}\right)+\frac{1}{\sqrt{n}}\\
 & +\sqrt{\frac{k\left(1+\log\left(1+\frac{2R^{2}}{\rho^{2}}\left(1+2\log\left(2R\sqrt{k}\right)+\frac{2}{k}\log n\right)\right)\right)+2\log\frac{n}{\delta}}{4(n-1)}}+\omega\left(\frac{1}{n^{\frac{1}{2k}}}\right).
\end{align*}


Note that for all $i\in[m]$
\[
\max_{\norm{\epsilon_{i}}\leq\rho}\mathcal{L}_{\mathcal{S}}\left(\theta_{i}^{'}+\epsilon_{i}\right)\leq\max_{\norm{\epsilon_{i}}\leq\rho}\mathcal{L}_{\mathcal{S}}\left(\theta_{i}+\epsilon_{i}\right)+\omega\left(\frac{1}{n^{\frac{1}{2k}}}\right),
\]
therefore, we reach
\begin{align*}
\mathcal{L}_{\mathcal{D}}\left(\mathbb{Q}\right) & \leq\sum_{i=1}^{m}\pi_{i}\max_{\norm{\epsilon_{i}}\leq\rho}\mathcal{L}_{\mathcal{S}}\left(\theta_{i}+\epsilon_{i}\right)+\frac{1}{\sqrt{n}}\\
 & +\sqrt{\frac{k\left(1+\log\left(1+\frac{2R^{2}}{\rho^{2}}\left(1+2\log\left(2R\sqrt{k}\right)+\frac{2}{k}\log n\right)\right)\right)+2\log\frac{n}{\delta}}{4(n-1)}}+2\omega\left(\frac{1}{n^{\frac{1}{2k}}}\right)\\
 & \leq\mathcal{L}_{\mathcal{S}}\left(\mathbb{Q}\right)+\frac{1}{\sqrt{n}}+2\omega\left(\frac{1}{n^{\frac{1}{2k}}}\right)\\
 &+\sqrt{\frac{k\left(1+\log\left(1+\frac{2R^{2}}{\rho^{2}}\left(1+2\log\left(2R\sqrt{k}\right)+\frac{2}{k}\log n\right)\right)\right)+2\log\frac{n}{\delta}}{4(n-1)}}.
\end{align*}

For any distribution $\mathbb{Q}$, we approximate $\mathbb{Q}$ by
its empirical distribution
\[
\mathbb{Q}_{m}=\frac{1}{m}\sum_{i=1}^{m}\delta_{\theta_{i}},
\]
which weakly converges to $\mathbb{Q}$ when $m\goto\infty$. By using
the achieved results for $\mathbb{Q}_{m}$ and taking limitation when
$m\goto\infty$, we reach the conclusion. 
\end{proof}

\begin{theorem}
    \label{thm:appen_Bayses_SAM_solution}(Theorem 3.3 in the main paper) The optimal solution the OP in 
is the sharpness-aware posterior distribution $\mathbb{Q}_{S}^{SA}$
with the density function $q^{SA}(\theta|\mathcal{S})$:
\begin{align*}
q^{SA}(\theta|\mathcal{S}) & \propto\exp\left\{ -\lambda\max_{\theta':\norm{\theta'-\theta}\leq\rho}\mathcal{L}_{\mathcal{S}}\left(\theta'\right)\right\} p\left(\theta\right)\\
 & =\exp\left\{ -\lambda\mathcal{L}_{\mathcal{S}}\left(s\left(\theta\right)\right)\right\} p\left(\theta\right),
\end{align*}
where we have defined $s\left(\theta\right)=\argmax{\theta':\norm{\theta'-\theta}\leq\rho}\mathcal{L}_{\mathcal{S}}\left(\theta'\right)$.
\end{theorem}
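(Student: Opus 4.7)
The plan is to reduce Theorem~\ref{thm:appen_Bayses_SAM_solution} to the already-established Theorem~\ref{thm:appen_Gibbs_posterior_1} by introducing the auxiliary loss
\begin{align*}
g(\theta) \;:=\; \max_{\theta':\norm{\theta'-\theta}\leq\rho}\mathcal{L}_{\mathcal{S}}(\theta') \;=\; \mathcal{L}_{\mathcal{S}}(s(\theta)),
\end{align*}
and then repeating the Lagrangian / Gibbs-variational derivation verbatim with $\mathcal{L}_{\mathcal{S}}$ replaced by $g$. The only substantive work beyond Theorem~\ref{thm:appen_Gibbs_posterior_1} is checking that $g$ is a legitimate measurable function so that $\int_\Theta g(\theta)\,q(\theta)d\theta$ and the closed-form normaliser are well defined.

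First I would establish the regularity of $g$. Since $\Theta$ is compact and, by Lemma~\ref{lem:modulus}, $\theta\mapsto\mathcal{L}_{\mathcal{S}}(\theta)$ is uniformly continuous on $\Theta$, the closed ball $\{\theta':\norm{\theta'-\theta}\leq\rho\}\cap\Theta$ is compact for each $\theta$ and the maximum defining $g(\theta)$ is attained. A standard Berge maximum theorem argument then shows $g$ is continuous, hence Borel measurable and bounded on $\Theta$, so $\int_\Theta \exp\{-\lambda g(\theta)\}\,p(\theta)d\theta=:Z$ is a finite positive normaliser and the candidate density $q^{SA}(\theta|\mathcal{S})=Z^{-1}\exp\{-\lambda g(\theta)\}p(\theta)$ is well defined.

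Next, rather than re-deriving the variational optimum through a Lagrange multiplier as in Theorem~\ref{thm:appen_Gibbs_posterior_1} (which is also available and would go through unchanged), I would use the cleaner one-line argument based on the Donsker--Varadhan / Gibbs identity. For any $\mathbb{Q}\ll\mathbb{P}$ one checks by direct substitution that
\begin{align*}
\lambda\,\mathbb{E}_{\theta\sim\mathbb{Q}}\!\left[g(\theta)\right]+\KLdiv{\mathbb{Q}}{\mathbb{P}}
\;=\; \KLdiv{\mathbb{Q}}{\mathbb{Q}^{SA}_{\mathcal{S}}} \;-\; \log Z,
\end{align*}
where the right-hand side uses the definition of $q^{SA}$. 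Since $\log Z$ does not depend on $\mathbb{Q}$ and $\KLdiv{\mathbb{Q}}{\mathbb{Q}^{SA}_{\mathcal{S}}}\geq 0$ with equality iff $\mathbb{Q}=\mathbb{Q}^{SA}_{\mathcal{S}}$, the objective in (\ref{eq:Bayes_SAM_OP}) attains its unique minimum at $\mathbb{Q}=\mathbb{Q}^{SA}_{\mathcal{S}}$. This yields both existence and uniqueness of the optimum in a single line.

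The main obstacle I anticipate is purely technical, not conceptual: handling the nonsmoothness and potential non-uniqueness of the argmax $s(\theta)$. Uniqueness of $s(\theta)$ is not needed because only the \emph{value} $g(\theta)=\mathcal{L}_{\mathcal{S}}(s(\theta))$ enters the density, and this value is unambiguously defined by the maximum. The continuity/measurability of $g$ handles the rest, so no further care is required. Once $g$ is shown to be a well-defined bounded measurable function, the result follows essentially by repeating Theorem~\ref{thm:appen_Gibbs_posterior_1}.
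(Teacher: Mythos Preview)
Your proposal is correct and, at its core, follows exactly the paper's route: the paper simply repeats the Lagrangian argument of Theorem~\ref{thm:appen_Gibbs_posterior_1} verbatim with $\mathcal{L}_{\mathcal{S}}(\theta)$ replaced by $\mathcal{L}_{\mathcal{S}}(s(\theta))$, i.e.\ by your auxiliary function $g(\theta)$, without any additional regularity discussion.

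The one genuine difference is methodological: the paper sets up the Lagrangian, takes a formal functional derivative, and reads off the stationary point, whereas you use the Donsker--Varadhan decomposition $\lambda\,\mathbb{E}_{\mathbb{Q}}[g]+\KLdiv{\mathbb{Q}}{\mathbb{P}}=\KLdiv{\mathbb{Q}}{\mathbb{Q}^{SA}_{\mathcal{S}}}-\log Z$. Your route buys you a rigorous global minimum and uniqueness in one line (via nonnegativity of KL), whereas the paper's stationary-point argument only identifies a candidate and implicitly relies on convexity to conclude it is the minimiser. Your measurability/continuity check on $g$ is also an addition that the paper omits. Either argument is fine here; yours is slightly more careful and self-contained.
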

\begin{proof}
We have
\[
\lambda\mathbb{E}_{\theta\sim\mathbb{Q}}\left[\max_{\theta':\norm{\theta'-\theta}\leq\rho}\mathcal{L}_{\mathcal{S}}\left(\theta'\right)\right]+KL\left(\mathbb{Q},\mathbb{P}\right)=\lambda\int\mathcal{L}_{S}\left(s\left(\theta\right)\right)q\left(\theta\right)d\theta+\int q(\theta)\log\frac{q\left(\theta\right)}{p\left(\theta\right)}d\theta.
\]

The Lagrange function is as follows
\[
L\left(q,\alpha\right)=\lambda\int\mathcal{L}_{S}\left(s(\theta)\right)q\left(\theta\right)d\theta+\int q(\theta)\log\frac{q\left(\theta\right)}{p\left(\theta\right)}d\theta+\alpha\left(\int q(\theta)d\theta-1\right).
\]

Take derivative w.r.t. $q\left(\theta\right)$ and set it to $0$, we
obtain
\[
\lambda\mathcal{L}_{S}\left(s(\theta)\right)+\log q\left(\theta\right)+1-\log p\left(\theta\right)+\alpha=0.
\]
\[
q\left(\theta\right)=\exp\left\{ -\lambda\mathcal{L}_{S}\left(s(\theta)\right)\right\} p\left(\theta\right)\exp\left\{ -\alpha-1\right\} .
\]
\[
q\left(\theta\right)\propto\exp\left\{ -\lambda\mathcal{L}_{S}\left(s(\theta)\right)\right\} p\left(\theta\right).
\]
\end{proof}

\subsection{Technicalities of the baselines and the corresponding flat versions}
\label{sec:train_tech}
In what follows, we present how the baselines used in the experiments can be viewed as variational and MCMC approaches and incorporate our sharpness-aware technique.

\paragraph{Bayesian deep ensemble \cite{lakshminarayanan2017simple}:} We consider the approximate posterior $q_\phi = \frac{1}{K}\sum_{k=1}^K \delta_{\theta_k}$ where $\delta$ is the Dirac delta distribution as a uniform distribution over several base models $\theta_{1:K}$. Considering the prior distribution $p(\theta) = \mathcal{N}(0, \mathbb{I})$, we have the following OPs for the non-flat and flat versions.

\textbf{Non-flat version:}
\[
\min_{\theta_{1:K}}\left\{ \mathbb{E}_{\theta_{k}\sim q_{\phi}}\left[\lambda\mathcal{L}_{S}\left(\theta_{k}\right)\right]+KL\left(\frac{1}{K}\sum_{k=1}^{K}\delta_{\theta_{k}},\mathcal{N}\left(0,\mathbb{I}\right)\right)\right\} ,
\]
where $KL\left(\frac{1}{K}\sum_{k=1}^{K}\delta_{\theta_{k}},\mathcal{N}\left(0,\mathbb{I}\right)\right)=-\frac{1}{K}\sum_{k=1}^{K}\log\mathcal{N}\left(\theta_{k}\mid0,\mathbb{I}\right)+\text{const}$, leading to the L2 regularization terms.

\textbf{Flat version:}
\[
\min_{\theta_{1:K}}\left\{ \mathbb{E}_{\theta_{k}\sim q_{\phi}}\left[\lambda\max_{\theta':\Vert\theta'-\theta_{k}\Vert\leq\rho}\mathcal{L}_{S}\left(\theta'\right)\right]+KL\left(\frac{1}{K}\sum_{k=1}^{K}\delta_{\theta_{k}},\mathcal{N}\left(0,\mathbb{I}\right)\right)\right\}.
\]

\paragraph{MC-Dropout \cite{gal2016-mcdropout}:}
As shown in \cite{gal2016-mcdropout}, the MC-dropout can be viewed as a BNN with the approximate posterior $q_\phi = \delta_\phi$ where $\phi$ is a fully-connected base model without any dropout and the prior distribution $p(\theta) = \mathcal{N}(0, \mathbb{I})$. The $KL(q_\phi, p(\theta))$ can be approximated which turns out to be a weighted L2 regularization where the weights are proportional to the keep-prob rates at the layers. The main term $\mathbb{E}_{\theta\sim q_{\phi}}\left[\lambda\mathcal{L}_{S}\left(\theta\right)\right]$ can be interpreted as applying the dropout before minimizing the loss. For our flat version, the main term is $\mathbb{E}_{\theta\sim q_{\phi}}\left[\lambda\max_{\theta':\Vert\theta'-\theta\Vert\leq\rho}\mathcal{L}_{S}\left(\theta'\right)\right]$.  

 \paragraph{BNNs with Stochastic Gradient Langevin Dynamics
(SGLD) \cite{welling2011bayesian}:} For SGLD, we sample one or several particle models directly from the posterior distribution $q(\theta \mid S)$ for the non-flat version and from the SA-posterior distribution $q^{SA}(\theta \mid S)$ for the flat version. For the non-flat version, the update is similar to the mini-batch SGD except that we add small Gaussian noises to the particle models. For our flat version, we first compute the perturbed model $\theta^a$ for a given particle model $\theta$ and use the mini-batch SGD update with the gradient evaluated at $\theta^a$ together with small Gaussian noises. 

\paragraph{SWAG \cite{Maddox2019-swag}:} We consider SWAG as an MCMC approach, where we keep a trajectory of particle models using SWA. Additionally, the covariance matrices are determined based on this trajectory to form an approximate Gaussian posterior. In the corresponding flat version to this approach, we employ SWA to sample from the SA (Sharpness-Aware) posterior. Specifically, we first calculate the perturbed model $\theta^a$ based on the current model $\theta$ and then employ mini-batch SGD updates with the gradient evaluated at model $\theta^a$. Finally, we update the final model using the SWA strategy.

\section{Additional experiments}

\subsection{Comparison with bSAM method}
We conduct experiments to compare our flat BNN with bSAM \cite{mollenhoff2022sam} on Resnet18, the results are shown in Table \ref{tab:ex-bsam}. The authors of bSAM explored the relationship between SAM and BNN and proposed a combination of SAM and Adam to optimize the mean of parameters in BNN networks while keeping the variance fixed. The results clearly indicate that our flat BNN outperforms bSAM in most metric scores. Here we note that we are unable to evaluate bSAM on the architectures used in Tables 1 and 2 in the main paper because the authors did not release the code. Instead, we run our methods with the setting mentioned in the bSAM paper.

\begin{table*}[h]
\centering
\caption{Classification score on Resnet18}
\label{tab:ex-bsam}
\resizebox{.65\columnwidth}{!}{
\begin{tabular}{lccc|ccc}
\midrule

  & \multicolumn{3}{c}{\textbf{CIFAR-10}}      & \multicolumn{3}{c}{\textbf{CIFAR-100}} \\
Method     & ACC $\uparrow$   & NLL $\downarrow$  & ECE $\downarrow$  & ACC $\uparrow$   & NLL $\downarrow$      & ECE $\downarrow$  \\ \hline \midrule
bSAM & 96.15 & 0.1200 & 0.0049 & 80.22 & \textbf{0.7000} & 0.0310 \\
F-SWAG-Diag & 96.56 & 0.1047 & \textbf{0.0037} & 80.70 & 0.7012 & \textbf{0.0227} \\
F-SWAG & \textbf{96.58} & \textbf{0.1045} & 0.0045 & \textbf{80.74} & 0.7024 & 0.0243  \\ \bottomrule
\end{tabular}}
\end{table*}

\begin{table}[t]
\centering
\caption{Classification score on CIFAR-10-C using PreResNet-164 model when training with CIFAR-10 dataset}
\label{tab:ex-acc-CIFAR-10c-full}
\resizebox{1.\columnwidth}{!}{%
\begin{tabular}{lcc|ccccc|cc}
\toprule
& \multicolumn{4}{c}{\textbf{ECE $\downarrow$}} &  \multicolumn{4}{c}{\textbf{Accuracy $\uparrow$}} \\
Method &  SWAG-D & F-SWAG-D & SWAG & F-SWAG & 	& 		SWAG-D & F-SWAG-D & SWAG & F-SWAG \\ \midrule \midrule
Gaussian noise &  0.0765 & 0.0765 & 0.1032 & 0.0091 & 	& 		72.01 & 73.95 & 71.58 & 73.43 \\
Shot noise &  0.0661 & 0.0647 & 0.0892 & 0.0075 & 	& 		75.77 & 77.09 & 75.44 & 76.28 \\
Speckle noise &  0.0711 & 0.0686 & 0.0921 & 0.0072 & 	& 		75.55 & 76.71 & 75.36 & 76.15 \\
Impulse noise &  0.0779 & 0.0706 & 0.0988 & 0.0077 & 	& 		73.74 & 74.61 & 73.71 & 74.49 \\
Defocus blur &  0.0108 & 0.0071 & 0.0178 & 0.0256 & 	& 		92.16 & 91.55 & 92.14 & 91.63 \\
Gaussian blur &  0.0130 & 0.0116 & 0.0214 & 0.0239 & 	& 		90.79 & 89.73 & 90.67 & 90.22 \\
Motion blur &  0.0147 & 0.0103 & 0.0233 & 0.0298 & 	& 		90.33 & 90.22 & 90.20 & 90.52 \\
Zoom blur &  0.0099 & 0.0070 & 0.0185 & 0.0301 & 	& 		91.24 & 90.71 & 91.12 & 91.36 \\
snow &  0.0298 & 0.0245 & 0.0419 & 0.0208 & 	& 		86.17 & 86.11 & 86.10 & 85.81 \\
Fog &  0.0114 & 0.0075 & 0.0176 & 0.0259 & 	& 		91.67 & 91.22 & 91.64 & 91.27 \\
Brightness &  0.0081 & 0.0076 & 0.0129 & 0.0281 & 	& 		93.47 & 92.94 & 93.45 & 92.98 \\
Contrast &  0.0110 & 0.0127 & 0.0141 & 0.0306 & 	& 		91.34 & 90.59 & 91.36 & 90.73 \\
Elastic transform &  0.0244 & 0.0213 & 0.0367 & 0.0220 & 	& 		87.06 & 86.61 & 86.98 & 86.94 \\
Pixelate &  0.0350 & 0.0269 & 0.0463 & 0.0124 & 	& 		85.75 & 86.11 & 85.61 & 85.74 \\
Jpeg compression &  0.0605 & 0.0522 & 0.0813 & 0.0093 & 	& 		78.01 & 78.90 & 77.57 & 79.80 \\
Spatter &  0.0242 & 0.0163 & 0.0341 & 0.0227 & 	& 		87.94 & 87.61 & 87.99 & 88.13 \\
Saturate &  0.0112 & 0.0080 & 0.0179 & 0.0288 & 	& 		92.10 & 91.79 & 92.09 & 91.81 \\
Frost &  0.0253 & 0.0174 & 0.0365 & 0.0215 &	& 		86.60 & 86.48 & 86.52 & 86.39 \\ \midrule
\textbf{Average} &  0.0322 & \textbf{0.0283} & 0.0446 & \textbf{0.0201} & & 			85.65 & \textbf{85.71} & 85.52 & \textbf{85.76} \\ \bottomrule
\end{tabular}
}
\end{table}

\subsection{Full result of Out-of-distribution prediction}
In Section 4.2 of the main paper, we provide a comprehensive analysis of the performance concerning various corruption groups, including noise, blur, weather conditions, and digital distortions. We present the detailed results for each corruption type in Table \ref{tab:ex-acc-CIFAR-10c-full}, providing a deeper understanding of the impact of these corruptions on the model's performance. On average, flat BNNs outperform their non-flat counterparts, especially on ECE with a notable margin. These findings further emphasize the effectiveness of flat BNNs in enhancing robustness and generalization against various corruptions.

\subsection{Additional ablation studies}

\textbf{Comparison of Hessian eigenvalue}
We report the log scale of the largest eigenvalue of the Hessian matrix over several methods applying to WideResNet28x10 using CIFAR-100, and the ratio of the largest and fifth eigenvalue as shown in Table \ref{tab:hessian_cost}, which evidently indicates that our method updates models to minima having lower curvature.

\begin{table}[h]
\centering
\caption{Log scale of Hessian eigenvalue of WideResNet28x10 training on CIFAR-100. $\lambda_1$ is the largest eigenvalue and $\lambda_5$ is 5th largest eigenvalue}
\label{tab:hessian_cost}
\begin{tabular}{ccccc}

\multicolumn{1}{c}{Method} & $\lambda_1$ $\downarrow$   & $\lambda_1$/$\lambda_5$ \\ \midrule \midrule
SWAG                       &    4.17 $\pm$ 0.001    & 1.17  $\pm$ 0.012     \\
F-SWAG                    &   \textbf{4.08 $\pm$ 0.000}    & 1.17 $\pm$ 0.020    \\  \midrule
SGLD                       & 3.34 $\pm$ 0.031       & 1.17  $\pm$ 0.009    \\
F-SGLD                     & \textbf{2.83$\pm$ 0.029}        & 1.15   $\pm$ 0.010     \\  \midrule
Deep-ensemble              & 4.64 $\pm$ 0.055 & 1.45 $\pm$ 0.020   \\
F-Deep-ensemble            & \textbf{4.01 $\pm$ 0.054} & 1.58 $\pm$ 0.032   \\ \midrule
\end{tabular}
\end{table}

\textbf{Computational cost}
Our flat-seeking method requires the computation of gradients twice: initially to obtain the perturbed model $\theta'$ and subsequently to update the model. Consequently, the training time is nearly double in comparison to non-flat counterparts, as indicated in Table \ref{tab:comp_cost}. Note that the Deep Ensemble settings utilize multiple models training individually for prediction and we report training time for one model in each setting.

\begin{table}[h!]
\centering
\caption{Comparison of training time per epoch}
\label{tab:comp_cost}
\resizebox{1.\columnwidth}{!}{%
\begin{tabular}{c|cc|cc|cc}
Network \& Dataset & SWAG    & F-SWAG & SGLD & F-SGLD & Deep-ensemble & F-Deep-ensemble \\ \midrule \midrule
WideResNet28x10 \& CIFAR-100 & 110s & 169s &  110s & 233s &  110s & 218s \\ \midrule
Densenet-161 \& ImageNet & 1.75h & 2.28h &  1.78h & 2.49h & -  & - \\
ResNet-152 \& ImageNet & 1.59h & 2.15h &  1.64h & 2.22h &  - & - \\ \bottomrule
\end{tabular}
}
\end{table}

\textbf{The effect of $KL$ term in Deep-ensemble settings} We present the results of training Deep-ensemble with SAM following the formula for the flat version in Section \ref{sec:train_tech} but without KL (or L2 regularisation) in Table \ref{tab:deep_ensem}. Each experiment is performed three times and reports the mean and standard deviation. Based on the result, without KL loss, our method still manages to yield better numbers than the non-flat counterparts.

\begin{table}[h]
\centering
\caption{Experiments of F-Deep-ensemble variations on CIFAR-100 dataset using WideResNet28x10. Each experiment is conducted with three different random seeds to calculate mean and standard deviation}
\label{tab:deep_ensem}
\resizebox{.8\columnwidth}{!}{
\begin{tabular}{lccc}
\toprule

  & \multicolumn{3}{c}{\textbf{WideResNet28x10}}   \\
Model     & ACC $\uparrow$   & NLL $\downarrow$  & ECE $\downarrow$ \\ \midrule \midrule

Deep-ensemble   &  83.04 $\pm$ 0.15 &  0.6958 $\pm$ 0.0335   &  0.0483 $\pm$ 0.0017  \\  \midrule
F-Deep-ensemble \textbf{(Our)}  &  \textbf{84.52 $\pm$ 0.03} & \textbf{0.5644 $\pm$ 0.0106} & \textbf{0.0191 $\pm$ 0.0039}    \\
F-Deep-ensemble (w/o L2) &  83.80 $\pm$  0.10  &  0.7026 $\pm$ 0.0007   &  0.0594 $\pm$ 0.0005 \\
\bottomrule
\end{tabular}}
\end{table}

\section{Experimental settings}
\textbf{CIFAR:} We conduct experiments using PreResNet-164, WideResNet28x10, Resnet10 and Resnet18 on both CIFAR-10 and CIFAR-100. The total number of images in these datasets is 60,000, which comprises 50,000 instances for training and 10,000 for testing. For each network-dataset pair, we apply Sharpness-Aware Bayesian methodology to various settings, including F-SGLD, F-SGVB, F-SWAG-Diag, F-SWAG, F-MC-Dropout, and F-Deep-Ensemble.

In the experiments presented in Tables 1 and 2 in the main paper, we train all models for 300 epochs using SGD, with a learning rate of 0.1 and a cosine schedule. We start collecting models after epoch 161 for the F-SWA and F-SWAG settings, consistent with the protocol in \cite{Maddox2019-swag}. Additionally, we set $\rho=0.05$ for CIFAR-10 and $\rho=0.1$ for CIFAR-100 in all experiments, except for Resnet10 and Resnet18, where $\rho$ is set to 0.01. The training set is augmented with basic data augmentations, including horizontal flip, padding by four pixels, random crop, and normalization. For the experiments presented in Table 3 in the main paper, we apply the same augmentations to the training set as in the experiments in Tables 1 and 2. However, the models are trained for 200 epochs using the Adam optimizer, with a learning rate of 0.001 and a plateau schedule. It's worth noting that SGVB and SGVB-LRT perform poorly with other settings than those mentioned, making it challenging to scale up this approach.

For the baseline of the Deep-Ensemble, SGLD, SGVB and SGVB-LRT methods, we reproduce results following the hyper-parameters and processes as our flat versions. Note that we train three independent models for the Deep-Ensemble method. For inference, we do an ensemble on 30 sample models for all settings sampled from posterior distributions. To ensure the stability of the method, we repeat each set three times with different random seeds and report the mean and standard deviation.

\textbf{ImageNet:} This is a large and challenging dataset with 1000 classes. We conduct experiments with Densenet-161 and ResNet-152 architecture on F-SWAG-Diag, F-SWAG, and F-SGLD. For all settings, we initialize the models with pre-trained weights on the ImageNet dataset, obtained from the \textit{torchvision} package, then fine-tuned for 10 epochs with $\rho=0.05$. We start collecting 4 models per epoch at the beginning of the fine-tuning process and evaluate them following a protocol consistent with the CIFAR dataset experiments.


The performance metrics for the SWAG-Diag, SWAG, and MC-Dropout methods are sourced from the original paper by Maddox et al. \cite{Maddox2019-swag}, except for the MC-Dropout result on PreResNet-164 for the CIFAR-100 dataset, which we reproduce due to its unavailability. The performance of bSAM is taken from \cite{mollenhoff2022sam}. 

It's important to note that the purpose of these experiments was not to achieve state-of-the-art performance. Instead, we aim to demonstrate the utility of the sharpness-aware posterior when integrated with specific Bayesian Neural Networks. The implementation is provided in \url{https://github.com/anh-ntv/flat_bnn.git}.

\textbf{Hyper-parameters for training}: Table \ref{tab:hyperparam} provides our setup for both training and testing phases. Note that the SWAG-Diag method follows the same setup as SWAG. Typically, using the default $\rho = 0.05$ yields a good performance across all experiments. However, $\rho=0.1$ is recommended for the CIFAR-100 dataset in \cite{foret2021sharpnessaware}. For model evaluation, we use the checkpoint from the final epoch without taking into account the validation set's performance.

\begin{table}
\centering
\caption{Hyperparameters for training both flat and non-flat versions of BNNs. All models are trained with the input resolution of $224 \times 224$ and cosine learning rate decay, except experiments of SGVB and SGVB-LRT, which use an input resolution of $32 \times 32$ }
\label{tab:hyperparam}
\resizebox{1.\columnwidth}{!}{\begin{tabular}{llcccccc}
\toprule

Model  & Method & Init weight & Epoch & LR init & Weight decay & $\rho$ & \# samples \\ \midrule \midrule
\multicolumn{8}{c}{\textbf{CIFAR-100}} \\
PreResNet-164    & SWAG  &   &  &   &  &  & 30 \\
    & MC-Drop & Scratch  & 300  &  0.1 & 3e-4 & 0.1 & 30 \\
    & Deep-Ens &   &  &   &  &  & 3 \\  \cmidrule{1-8}
WideResNet28x10    & SWAG  &   &   &  & &  & 30 \\
    & MC-Drop & Scratch  & 300  &  0.1 & 5e-4 & 0.1 & 30 \\
    & Deep-Ens &   &   &   &  &  & 3 \\ \cmidrule{1-8}
Resnet10    & SGVB &  &  &  &  & 5e-3 &  \\
    &F-SGVB + Geometry &   &   &   &  & 5e-4 &  \\
    & SGVB-LRT & Scratch  & 200  &  0.001 & 5e-4 & 5e-3 & 30 \\  
    &F-SGVB-LRT + Geometry &   &   &   &  & 5e-4 & \\ \cmidrule{1-8}
Resnet18      & SGVB &  &  &  &  & 5e-3 &  \\
    &F-SGVB + Geometry &   &   &   &  & 5e-4 &  \\
    & SGVB-LRT & Scratch  & 200  &  0.001 & 5e-4 & 5e-3 & 30 \\  
    &F-SGVB-LRT + Geometry &   &   &   &  & 5e-4 & \\ \cmidrule{2-8}
    & SWAG & Scratch  & 300  &  0.1 & 5e-4 & 0.1 & 30 \\ \midrule

\multicolumn{8}{c}{\textbf{CIFAR-10}} \\
PreResNet-164    & SWAG  &   &  &   &  &  & 30 \\
    & MC-Drop & Scratch  & 300  &  0.1 & 3e-4 & 0.05 & 30 \\
    & Deep-Ens &  &   &   &  &  & 3 \\  \cmidrule{1-8}
WideResNet28x10    & SWAG  &   &   &  & &  & 30 \\
    & MC-Drop & Scratch  & 300  &  0.1 & 5e-4 & 0.05 & 30 \\
    & Deep-Ens &   &   &   &  &  & 3 \\ \midrule
Resnet10    & SGVB &  &  &  &  & 5e-3 &  \\
    &F-SGVB + Geometry &   &   &   &  & 5e-4 &  \\
    & SGVB-LRT & Scratch  & 200  &  0.001 & 5e-4 & 5e-3 & 30 \\  
    &F-SGVB-LRT + Geometry &   &   &   &  & 5e-4 & \\ \cmidrule{1-8}
Resnet18      & SGVB &  &  &  &  & 5e-3 &  \\
    &F-SGVB + Geometry &   &   &   &  & 5e-4 &  \\
    & SGVB-LRT & Scratch  & 200  &  0.001 & 5e-4 & 5e-3 & 30 \\  
    &F-SGVB-LRT + Geometry &   &   &   &  & 5e-4 & \\ \cmidrule{2-8}
    & SWAG & Scratch  & 300  &  0.1 & 5e-4 & 0.1 & 30 \\ \midrule

\multicolumn{8}{c}{\textbf{ImageNet}} \\
DenseNet-161   & All methods  &  Pre-trained & 10 & 0.001  & 1e-4 & 0.05  & 30 \\
ResNet-152    & All methods  &  Pre-trained & 10 & 0.001  & 1e-4 & 0.05  & 30 \\
\bottomrule
\end{tabular}}
\end{table}


\end{document}